%% file: main.tex
\documentclass{article}

\usepackage{microtype}
\usepackage{graphicx}
\usepackage{subcaption}
\usepackage{booktabs} 

\usepackage{hyperref}

\usepackage[preprint]{icml2026}

\usepackage[table]{xcolor}

\definecolor{claudepurple}{RGB}{75, 0, 130}  
\definecolor{geminipink}{RGB}{130,0,75}
\definecolor{ebcolor}{RGB}{10, 75, 20}  
\definecolor{mbcolor}{RGB}{255, 70, 162}  

\newcommand{\K}{S} 

\newcommand{\domain}{\mathcal{X}}
\newcommand{\vx}{\mathbf{x}}
\newcommand{\bX}{\mathbf{X}}
\newcommand{\bZ}{\mathbf{Z}}
\newcommand{\sample}{f}
\newcommand{\samplevec}{\mathbf{\sample}}
\newcommand{\actualgp}{g}
\newcommand{\stochprocess}{f}
\newcommand{\bSigma}{\boldsymbol{\Sigma}}

\usepackage{amsmath}
\usepackage{amssymb}
\usepackage{mathtools}
\usepackage{amsthm}
\usepackage{amsfonts}
\usepackage{bm}
\usepackage{thmtools}
\usepackage{enumitem}
\usepackage{multirow}
\usepackage{adjustbox}

\usepackage[capitalize,noabbrev]{cleveref}

\newcommand{\iid}{i.i.d.\ }
\usepackage[textsize=tiny]{todonotes}

\icmltitlerunning{Empirical Gaussian Processes}

\newcommand{\myauthorspacing}{\unskip\hspace{2.26pt}}

\begin{document}

\twocolumn[
  \icmltitle{Empirical Gaussian Processes}



  \icmlsetsymbol{equal}{*}
  {\setlength{\tabcolsep}{10pt} 
  \begin{icmlauthorlist}
    \icmlauthor{Jihao Andreas Lin}{equal,meta}\myauthorspacing
    \icmlauthor{Sebastian Ament}{equal,meta}\myauthorspacing
    \icmlauthor{Louis C. Tiao}{meta}\myauthorspacing
    \icmlauthor{David Eriksson}{meta}\myauthorspacing
    \icmlauthor{Maximilian Balandat}{meta}\myauthorspacing
    \icmlauthor{Eytan Bakshy}{meta}\myauthorspacing
  \end{icmlauthorlist}
  }

  \icmlaffiliation{meta}{Meta}

  \icmlcorrespondingauthor{Jihao Andreas Lin}{jandylin@meta.com}
  \icmlcorrespondingauthor{Sebastian Ament}{sebastianament@meta.com}

  \icmlkeywords{Machine Learning, ICML}

  \vskip 0.3in
]



\printAffiliationsAndNotice{\icmlEqualContribution}

\begin{abstract}
Gaussian processes (GPs) are powerful and widely used probabilistic regression models, but their effectiveness in practice is often limited by the choice of kernel function. This kernel function is typically handcrafted from a small set of standard functions, a process that requires expert knowledge, results in limited adaptivity to data, and imposes strong assumptions on the hypothesis space. We study Empirical GPs, a principled framework for constructing flexible, data-driven GP priors that overcome these limitations. Rather than relying on standard parametric kernels, we estimate the mean and covariance functions empirically from a corpus of historical observations, enabling the prior to reflect rich, non-trivial covariance structures present in the data. Theoretically, we show that the resulting model converges to the GP that is closest (in KL-divergence sense) to the real data generating process. Practically, we formulate the problem of learning the GP prior from independent datasets as likelihood estimation and derive an Expectation-Maximization algorithm with closed-form updates, 
allowing the model handle heterogeneous observation locations across datasets.
We demonstrate that Empirical GPs achieve competitive performance on learning curve extrapolation and time series forecasting benchmarks.
\end{abstract}  

\begin{figure*}[ht]
    \centering
    \includegraphics[width=6.4in]{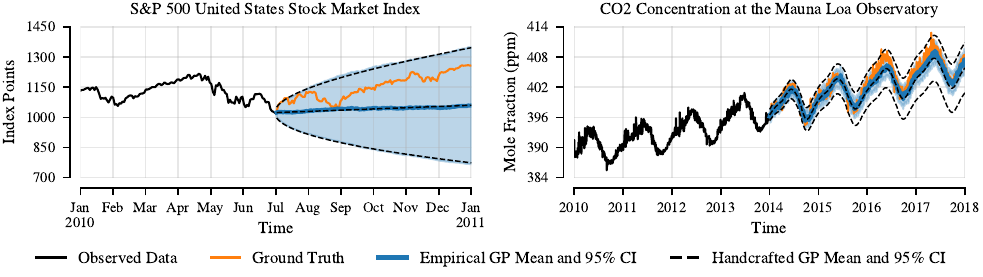}
    \caption{
    Without human intervention, Empirical GP captures the behavior of kernels which were handcrafted by human experts. \textbf{Left:} On financial stock market data, Empirical GP matches the canonical geometric Brownian motion model. 
    \textbf{Right:} On atmospheric climate data, Empirical GP infers seasonality and an upwards trend, 
    achieving 21\% lower RMSE
    than an expert-designed kernel~\cite{rasmussen2024co2model}.
    }
    \label{fig:empirical_vs_handcrafted}
\end{figure*}

\section{Introduction}
\label{sec:introduction}
Gaussian processes (GPs) provide an elegant framework for probabilistic regression, offering data efficiency and principled uncertainty quantification essential for applications ranging from scientific modeling to Bayesian optimization \citep{rasmussen2006gaussian}. 
However, the practical utility of GPs hinges on the choice of kernel function, which encodes prior beliefs about the function being modeled. 
The kernel function is typically selected from a small set of standard stationary kernels such as radial basis functions (RBF), Mat\'ern, or periodic kernels, and the kernel hyperparameters are estimated by maximizing the marginal log-likelihood.

This approach is limited in its flexibility, since there is no ``one size fits all'' kernel. Having an expert hand-craft an artisan kernel can substantially improve the model, but is only feasible for the most critical applications.
\citet{duvenaud2013kernelsearch}  
and other have tried to overcome this by automatically constructing kernel compositions in a data-driven fashion, 
but this is still limited to sums and products of canonical kernels, and requires approximations to intractable integrals.  
Finally, many standard kernels are stationary, which 
are poorly suited for extrapolation, i.e. predictions at inputs beyond the training range.
The extrapolation problem is common to numerous practical settings, such as predicting the final performance of a machine learning model after observing only the first few epochs of training~\citep{swersky2014freeze, lin2025lkgp}.  
Learning curves typically exhibit a characteristic structure of rapid initial improvement followed by gradual saturation, which is consistent across experiments but not captured by stationary kernels. 
Similar challenges arise in time series forecasting, where future predictions must extrapolate temporal patterns, and in sequential experimental design, where expensive evaluations must be guided by predictions at unexplored configurations often outside the support of the collected data.

Recent work used historical data to construct more informative GP priors. 
\citet{perrone2018scalable,wistuba2021few} consider using neural networks as feature extractors or deep kernels for knowledge transfer in Bayesian optimization. \citet{wang2024hyperbo} proposed pre-training neural networks to parameterize mean and covariance functions for GPs using historical HPO (hyperparameter optimization) datasets. These approaches require careful architecture design, large training corpora to avoid overfitting, and can require time-consuming hyperparameter optimization.

In this paper, we study \emph{Empirical Gaussian Processes}, a framework that takes a fundamentally different approach. 
Rather than learning hyperparameters of a parametric kernel, we estimate the GP prior mean and covariance functions directly from a corpus of historical observations of the data generating stochastic process. 
The premise of this approach is that available independent realizations, e.g., learning curves from previous experiments, serve as samples from the unknown prior. 
The empirical prior, properly estimated, will therefore capture structure present in the historical data,
including non-stationarity, heteroscedasticity, and domain-specific correlations. 
Unlike parametric kernels that impose rigid assumptions, the empirical covariance adapts to the data, enabling extrapolation that adheres to patterns observed in previous realizations of the process.

Our work revisits the Hierarchical Bayesian framework 
of \citet{schwaighofer2004learningGPkernels}, 
which we extend by generalizing the training algorithm to support spatially heterogeneous observations without grid alignment, and by introducing a novel inference scheme that prevents  overconfidence when extrapolating away from historical data.

Our main contributions are as follows:\\[-4ex]
\begin{enumerate}[leftmargin=14pt, labelwidth=!, labelindent=0pt, itemsep=0.8pt]
    \item We introduce Empirical GPs, a principled framework for learning non-parametric GP priors from independent datasets via maximum likelihood estimation that generalizes previous work and makes it practical.
    \item We provide theoretical justification for the approach, proving that an Empirical GP converges to the best Gaussian approximation of the true data-generating process.
    \item We derive a closed-form Expectation-Maximization algorithm with exact E-step and M-step updates, extending prior work from the discrete to the continuous setting.
    \item We develop a technique to efficiently extend the learned prior to new input locations, analogous to inducing point methods but operating on the prior. 
    \item We demonstrate competitive predictive performance on time series forecasting benchmarks, even outperforming some neural-network based models, and learning curve extrapolation, where Empirical GPs consistently outperform a transformer-based baseline.
\end{enumerate}

\begin{figure*}[ht]
    \centering
    \includegraphics[width=6.75in]{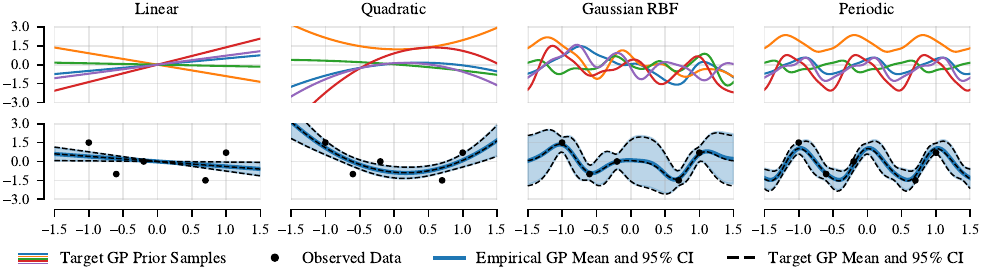}
    \caption{\textbf{Top:} Sample paths from a variety of target GPs with distinct covariances functions are used as historical data for Empirical GP. \textbf{Bottom:} On a fixed set of new observations, the Empirical GP posterior converges to the posterior of the corresponding target GP, when using 256 sample paths from the latter as historical data, demonstrating data-driven adaptivity and convergence to the target GP.}
    \label{fig:target_gp}
\end{figure*}

\section{Related Work}
\label{sec:related_work}


{\bf Meta-Learning for Gaussian Processes.} 
Several approaches learn GP priors from multi-task data. 
\citet{zhou2019adaptive} learn basis functions for Bayesian linear regression, 
while~\citet{rothfuss2021pacoh} provide PAC-Bayesian guarantees for meta-learning GP hyperparameters.
These methods optimize within parametric kernel families. 
In contrast, we follow the idea of~\citet{schwaighofer2004learningGPkernels} and directly estimate the prior mean and covariance functions from sample statistics, yielding a non-parametric prior. 

{\bf Multi-Task GPs.}
Multi-task GP methods model correlations between tasks through task kernels \citep{bonilla2007multi, swersky2013multi}, but scale poorly with the number of tasks. 
Transfer learning approaches~\citep{salinas2020quantile, feurer2018scalable} offer improved scalability but sacrifice principled uncertainty quantification.

{\bf Variational Inference.}
Our formulation shares similarities with variational GP methods \citep{titsias2009variational, hensman2013gaussian}: both work with finite-dimensional marginals. 
However, variational methods approximate a posterior given a fixed prior, whereas we estimate the prior itself.

{\bf Expressive Kernels.}
Deep kernel learning \citep{wilson2016deep} learns flexible features but requires substantial amounts of data and model fitting can suffer from instabilities. 
Spectral mixture kernels~\citep{wilson2013gaussian} can extrapolate quasi-periodic patterns but assume stationarity. 
Our empirical approach sidesteps these choices: the covariance is estimated directly from data, inheriting whatever structure exists in historical observations.

{\bf Learning Curve Prediction.}
\citet{domhan2015speeding} extrapolate learning curves using parametric models, while \citet{swersky2014freeze} use a GP with a custom exponential mixture kernel to extrapolate learning curves.
\citet{lin2025lkgp} leverage latent Kronecker structure to scalably and jointly model hyper-parameter and learning curve progression dimensions.
\citet{adriaensen2023efficient} train Prior-Fitted Networks (PFNs) on learning curve data sampled from parametric priors.



\section{Empirical Gaussian Processes}
\label{sec:egps}

\subsection{Preliminaries}
\label{sec:prelim}
A Gaussian Process (GP) is a distribution over functions~$\actualgp$, any finite marginal of which has a joint multivariate normal distribution.
A GP is fully specified by its mean function $m(\mathbf{x}) = \mathbb{E}[\actualgp(\mathbf{x})]$ and its covariance function (kernel) $k(\mathbf{x}, \mathbf{x}') = \mathbb{E}[(\actualgp(\mathbf{x}) - m(\mathbf{x}))(\actualgp(\mathbf{x}') - m(\mathbf{x}'))]$. We write:
\begin{equation}
    \nonumber
    \actualgp(\mathbf{x}) \sim \mathcal{GP}\left(m(\mathbf{x}), k(\mathbf{x}, \mathbf{x}')\right).
\end{equation}
Any valid kernel function must be positive semi-definite (PSD). That is, for any set of input points $\{\mathbf{x}_1, \ldots, \mathbf{x}_N\} \subset \domain$ and any non-zero vector $\mathbf{v} \in \mathbb{R}^N$, the kernel matrix $\mathbf{K}$ with entries $K_{ij} = k(\mathbf{x}_i, \mathbf{x}_j)$ must satisfy $\mathbf{v}^\top \mathbf{K} \mathbf{v} \geq 0$.

\subsection{Function-Space View}
\label{subsec:egps:general}
Suppose we want to approximate a data-generating stochastic process.
Given $\K$ independent sample paths $f_1, \ldots, f_\K$,
we can estimate their true mean $m = \mathbb{E}[f]$ and covariance function $k = \mathrm{Cov}(f)$ via maximum likelihood,
\begin{equation}
\label{eq:empirical_mean_and_kernel}
   \begin{aligned}
        m_\K(\mathbf{x}) = \frac{1}{\K} \sum_{i=1}^{\K} f_i(\mathbf{x}), \ \
        k_\K(\mathbf{x}, \mathbf{x}') = \frac{1}{\K} \sum_{i=1}^{\K}
         \tilde \sample_i(\vx) \tilde \sample_i(\vx'),
    \end{aligned} 
\end{equation}
where $\tilde \sample_i(\vx) = f_i(\vx) - m(\vx)$.
We define the \emph{Empirical} GP using the \emph{empirical} mean $m_S$ and covariance function $k_S$ as $\mathcal{GP}\bigl(m_\K, k_\K\bigr)$.
Note that $k_\K$ is a valid kernel by construction, as it is a sum of outer products of the centered functions. 

\subsection{Theoretical Results}
\label{subsec:egp:theory}

We now provide a selection of key theoretical results that ground our approach.
A more formal treatment including all assumptions and proofs is given in Appendix~\ref{apd:theory}.

Let $\mathcal{GP}\bigl(m_\K, k_\K\bigr)$ be the Empirical GP defined by the empirical mean function $m_\K$ and empirical covariance function $k_\K$, conditioned on observed sample paths $f_1, ..., f_\K$. 
Additionally, let $\mathcal{GP}\bigl(m, k)$ be a GP defined by the true mean function $m$ and true covariance function $k$ of $\stochprocess$.
\begin{restatable}{proposition}{WeakConvergence}
\label{thm:weak_convergence}
Assume that $k$ is continuous and that its canonical semi-metric satisfies Dudley's entropy integral condition.
Then, for almost every sequence of sample paths $\{ f_i \}_{i=1}^\K$, we have $\mathcal{GP}\bigl(m_\K, k_\K\bigr) \rightharpoonup \mathcal{GP}\bigl(m, k)$ as $\K \to \infty$.
\end{restatable}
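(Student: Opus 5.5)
Weak convergence of Gaussian processes, viewed as Borel measures on $C(\domain)$ with $\domain$ compact (or on $\ell^\infty(\domain)$ for the separable version), follows from two facts: convergence of the finite-dimensional distributions, and asymptotic tightness. The plan is therefore in three parts.

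\textbf{Finite-dimensional distributions.} Fix a finite set $\mathbf{x}_1,\dots,\mathbf{x}_N$. By the strong law of large numbers applied to the i.i.d.\ vectors $(f_i(\mathbf{x}_j))_j$ and their products, we have almost surely
\[
m_\K(\mathbf{x}_j)\to m(\mathbf{x}_j), \qquad k_\K(\mathbf{x}_j,\mathbf{x}_l)\to k(\mathbf{x}_j,\mathbf{x}_l).
\]
This requires $\mathbb{E}[f(\mathbf{x})^2]<\infty$, which is implicit in $k$ being defined. If $k_\K$ is centered with $m_\K$ rather than $m$, the extra term $(m_\K-m)(\mathbf{x})(m_\K-m)(\mathbf{x}')$ also vanishes almost surely. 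Gaussian measures on $\mathbb{R}^N$ depend continuously on their mean and covariance, including degenerate covariances (use characteristic functions). Hence the marginals converge. To pass to all finite sets simultaneously, take a countable dense subset $D\subset\domain$. A countable intersection of full-measure events is still of full measure, so one null set serves all finite subsets of $D$.

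\textbf{Uniform convergence of the moments.} Tightness and extension off $D$ need more than pointwise limits, so I would upgrade to almost-sure uniform convergence:
\[
\sup_{\mathbf{x}}|m_\K-m|\to 0, \qquad \sup_{\mathbf{x},\mathbf{x}'}|k_\K-k|\to 0.
\]
This is a uniform SLLN, i.e.\ a Glivenko--Cantelli statement for the function classes $\{f\mapsto f(\mathbf{x})\}$ and $\{f\mapsto \tilde f(\mathbf{x})\tilde f(\mathbf{x}')\}$. I would get it from continuity of the sample paths, compactness, and an integrable envelope $\mathbb{E}\sup_{\mathbf{x}}|f(\mathbf{x})|^2<\infty$, using the standard bracketing or Mourier-type SLLN in the Banach space $C(\domain\times\domain)$. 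The appendix assumptions presumably include such an envelope condition, and I would state it explicitly.

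\textbf{Tightness (main obstacle).} Let $d_\K(\mathbf{x},\mathbf{x}')^2=\mathbb{E}_{\K}\bigl(g(\mathbf{x})-g(\mathbf{x}')\bigr)^2$ be the canonical semi-metric of the empirical GP, and $d$ that of $\mathcal{GP}(m,k)$. By the uniform convergence of $k_\K$,
\[
\sup_{\mathbf{x},\mathbf{x}'}|d_\K^2-d^2|=:\varepsilon_\K\to 0.
\]
Thus $d_\K\le d+\sqrt{\varepsilon_\K}$. Modulus bounds for centered Gaussian processes (Dudley's inequality) give
\[
\mathbb{E}\sup_{d(\mathbf{x},\mathbf{x}')<\delta}|g_\K(\mathbf{x})-g_\K(\mathbf{x}')| \lesssim \int_0^{\delta+\sqrt{\varepsilon_\K}}\sqrt{\log N(\domain,d_\K,u)}\,du .
\]
The difficulty is controlling the covering numbers of $d_\K$ at scales below $\sqrt{\varepsilon_\K}$. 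I would try to avoid this directly by comparing processes instead. Sudakov--Fernique bounds $\mathbb{E}\sup$ of the increment process of $g_\K$ by that of a process with increments $d^2+\varepsilon_\K$, namely $g+\sqrt{\varepsilon_\K}\,\xi$ with $\xi$ an independent process. For $\xi$ I would choose either white noise on a finite $\eta$-net, or a finite-rank Gaussian built from the Karhunen--Loève truncation of $k$, whose expected supremum is finite and independent of $\K$. A cleaner alternative, which I would pursue if the comparison argument gets messy, uses the fact that $k_\K$ has rank at most $\K$ with explicit continuous features $\tilde f_i/\sqrt{\K}$. Then $g_\K-m_\K=\sum_i Z_i\tilde f_i/\sqrt{\K}$ with $Z_i$ i.i.d.\ standard normal, and the modulus of continuity is controlled via the empirical modulus of the $f_i$, to which the uniform SLLN from the previous part applies.

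\textbf{Conclusion.} Either route yields asymptotic $d$-equicontinuity in probability. Combined with Dudley's condition for the limit process, which ensures $\mathcal{GP}(m,k)$ has a version with continuous paths, and with uniform mean convergence, Prohorov's theorem gives $\mathcal{GP}(m_\K,k_\K)\rightharpoonup\mathcal{GP}(m,k)$ for almost every sequence of sample paths.
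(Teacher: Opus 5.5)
\textbf{Verdict: genuine gap.} Your skeleton is the paper's: finite-dimensional convergence via characteristic functions, then a.s.\ uniform convergence of $m_\K,k_\K$ via Mourier's SLLN under $\mathbb{E}\|f\|_\infty^2<\infty$, then tightness and Prokhorov. You also locate the difficulty correctly. $\|d_{k_\K}-d_k\|_\infty\to0$ says nothing about $N(\domain,d_{k_\K},u)$ for $u\lesssim\sqrt{\varepsilon_\K}$. The paper's proof simply asserts at this point that the entropy integrals are uniformly bounded and invokes Dudley. Even that would not suffice, since one needs $\lim_{\delta\to0}\sup_\K\int_0^\delta\sqrt{\log N(\domain,d_{k_\K},u)}\,du=0$. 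But neither of your fixes closes the gap.

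\textbf{Route (a)} cannot work, because nothing that uses only $\|k_\K-k\|_\infty\to0$ yields tightness. Take $k_\K=k+\varepsilon_\K\sum_{j\le n_\K}\phi_j\otimes\phi_j$ with disjoint unit-height bumps $\phi_j$ of width $1/n_\K$. Then $\|k_\K-k\|_\infty\le\varepsilon_\K$, yet the paths oscillate by about $\sqrt{2\varepsilon_\K\log n_\K}$ at scale $1/n_\K$, which does not vanish for $n_\K=e^{1/\varepsilon_\K}$. Concretely, Sudakov--Fernique needs a comparison process whose increments dominate $\varepsilon_\K$ at arbitrarily close pairs. Any such process has infinite expected supremum by Sudakov minoration. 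White noise on a finite $\eta$-net leaves the within-cell oscillation, the very quantity to be bounded, untouched. A Karhunen--Lo\`eve truncation has increments tending to $0$, so the domination fails.

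\textbf{Route (b)} starts from the right object, since $g_\K=m_\K+\K^{-1/2}\sum_iZ_i\tilde f_i$ is a Gaussian multiplier bootstrap. However, $\frac1\K\sum_i\omega_{\tilde f_i}(\delta)^2$ only bounds $\sup_{\rho(\vx,\vx')<\delta}d_\K(\vx,\vx')^2$, i.e.\ the variance of each single increment. The supremum needs chaining in $d_\K$ at all scales, and the direct Cauchy--Schwarz bound is of order $\sqrt{\K}\,\bigl(\frac1\K\sum_i\omega_{\tilde f_i}(\delta)^2\bigr)^{1/2}$, which diverges.

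What is missing is a hypothesis on the law of $f$, not on $k$. Given $\mathbb{E}\|f\|_\infty^2<\infty$, a.s.\ convergence of the conditional laws of $\K^{-1/2}\sum_iZ_i(f_i-m)$ in $C(\domain)$ is equivalent to $f$ satisfying the CLT in $C(\domain)$ (the a.s.\ randomized CLT of Ledoux--Talagrand; cf.\ Gin\'e--Zinn). Dudley's condition on $d_k$ only gives pre-Gaussianity, which is strictly weaker. So the statement fails as posed.

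\textbf{Counterexample.} Take $\domain=\mathbb{N}\cup\{\infty\}$ and blocks $B_r\subset\mathbb{N}$ with $|B_r|=\lceil e^{2^r}\rceil$. Let $A_j$ be independent events with probability $2^{-r}$ for $j\in B_r$. Set $f(j)=r^{-2}(\mathbf{1}_{A_j}-2^{-r})$ and $f(\infty)=0$. Then $\|f\|_\infty\le1$ and paths are continuous, since $|f(j)|\le r^{-2}$. Also $k$ is diagonal with $k(j,j)\le r^{-4}2^{-r}$, and $\log N(\domain,d_k,u)\lesssim 2^r$ once $u$ exceeds the standard deviations on $B_{r+1}$. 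Hence Dudley's integral is at most a constant times $\sum_r r^{-2}$. Now let $\K=2^r$, $s=\lfloor\K/(2\log\K)\rfloor$, and $I_j=\{i\le\K:f_i\text{ realizes }A_j\}$. With probability tending to one, some $j\in B_r$ has $|I_j|=s$ and $I_j\subset\{i:Z_i>1\}$. Therefore $\sup_j|g_\K(j)-m_\K(j)|\gtrsim r^{-2}\sqrt{\K}/\log\K\to\infty$, and tightness fails for a.e.\ sequence.

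\textbf{Repair.} The fix is to assume the CLT for $f$. If $f$ is Gaussian, the CLT holds trivially because $\K^{-1/2}\sum_i(f_i-m)\overset{d}{=}f-m$, and the conditional multiplier CLT then yields the claim. Alternatively, assume a Jain--Marcus condition $|f(\vx)-f(\vx')|\le M\rho(\vx,\vx')$ with $\mathbb{E}M^2<\infty$ and $\rho$ entropy-integrable. Then your route (b) works directly: $d_\K\le C_\K\,\rho$ at every scale, where $C_\K^2=\frac1\K\sum_i(M_i+\bar M_\K)^2$ and $\bar M_\K=\frac1\K\sum_iM_i$, and $C_\K$ is a.s.\ bounded by the SLLN.
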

In particular, the limit $\mathcal{GP}\bigl(m, k)$ is the best possible Gaussian approximation of the true underlying stochastic process in terms of the KL divergence definition by \citet{sun2019functional}.
\begin{restatable}{proposition}{KLD}
\label{thm:kl_divergence}
Let $\mathbb{P}$ denote the law of $f$, and let $\mathbb{G}$ be any GP indexed by $\domain$.
Assume that $\mathbb{P}$ is absolutely continuous with respect to $\mathbb{G}$, and that $k$ is strictly positive definite.
Then,
$\lim_{\K \to \infty} \mathcal{GP}\bigl(m_\K, k_\K\bigr) = \arg\! \min_\mathbb{G} \; D_{\mathrm{KL}} ( \mathbb{P} \;\Vert\; \mathbb{G})$.
\end{restatable}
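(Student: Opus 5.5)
The plan has two parts: first identify the minimizer of $D_{\mathrm{KL}}(\mathbb{P}\Vert\mathbb{G})$ over Gaussian processes, then combine this with Proposition~\ref{thm:weak_convergence}.

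\textbf{Step 1: reduction to finite-dimensional marginals.} Following \citet{sun2019functional}, the functional KL divergence is the supremum over finite index sets $\bX=\{\vx_1,\dots,\vx_N\}\subset\domain$ of the KL divergences between the marginals,
\begin{equation}
\nonumber
D_{\mathrm{KL}}(\mathbb{P}\Vert\mathbb{G})=\sup_{\bX} D_{\mathrm{KL}}(\mathbb{P}_\bX\Vert\mathbb{G}_\bX).
\end{equation}
So it suffices to work with $N$-dimensional distributions. Absolute continuity guarantees that each term is well defined.

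\textbf{Step 2: the finite-dimensional minimizer.} Let $\mathbb{G}_\bX=\mathcal{N}(\boldsymbol\mu,\bSigma)$, and write $\mathbf{m}_\bX$ and $\mathbf{K}_\bX$ for the mean and covariance of $\mathbb{P}_\bX$. Decompose
\begin{equation}
\nonumber
D_{\mathrm{KL}}(\mathbb{P}_\bX\Vert\mathbb{G}_\bX)=-H(\mathbb{P}_\bX)-\mathbb{E}_{\mathbb{P}}[\log\mathcal{N}(\samplevec;\boldsymbol\mu,\bSigma)].
\end{equation}
The cross term depends on $\mathbb{P}$ only through its first two moments. Adding and subtracting the same quantity for the moment-matched Gaussian $\mathcal{N}(\mathbf{m}_\bX,\mathbf{K}_\bX)$ gives the Pythagorean identity
\begin{equation}
\nonumber
D_{\mathrm{KL}}(\mathbb{P}_\bX\Vert\mathbb{G}_\bX)=D_{\mathrm{KL}}(\mathbb{P}_\bX\Vert\mathcal{N}(\mathbf{m}_\bX,\mathbf{K}_\bX))+D_{\mathrm{KL}}(\mathcal{N}(\mathbf{m}_\bX,\mathbf{K}_\bX)\Vert\mathcal{N}(\boldsymbol\mu,\bSigma)).
\end{equation}
Strict positive definiteness of $k$ makes $\mathbf{K}_\bX$ invertible, so all these expressions are finite where needed. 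The second term is nonnegative and vanishes if and only if $\boldsymbol\mu=\mathbf{m}_\bX$ and $\bSigma=\mathbf{K}_\bX$.

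\textbf{Step 3: passing to the supremum.} This is the step I expect to be the main obstacle. The aim is to show that $\mathcal{GP}(m,k)$ minimizes the supremum and that it is the unique minimizer. For any $\mathbb{G}$ and every $\bX$, the identity gives
\begin{equation}
\nonumber
D_{\mathrm{KL}}(\mathbb{P}_\bX\Vert\mathbb{G}_\bX)\ge D_{\mathrm{KL}}(\mathbb{P}_\bX\Vert\mathcal{GP}(m,k)_\bX).
\end{equation}
Taking the supremum over $\bX$ yields optimality of $\mathcal{GP}(m,k)$. For uniqueness, suppose $\mathbb{G}$ with mean $\tilde m$ and kernel $\tilde k$ differs from $\mathcal{GP}(m,k)$ at some point pair, and let $\bX_0$ contain that pair. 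For any $\bX\supseteq\bX_0$, the second term of the identity is at least its value on $\bX_0$, since KL divergence is monotone under marginalization; call this value $\delta>0$. Hence the finite-dimensional KL of $\mathbb{G}$ exceeds that of $\mathcal{GP}(m,k)$ by at least $\delta$ along any increasing net of index sets. When the supremum for $\mathcal{GP}(m,k)$ is finite, the strict inequality survives the supremum; the finiteness is where the absolute-continuity hypothesis enters. If instead the supremum is infinite for every GP, the argmin should be interpreted marginal-wise, where the uniqueness argument still applies.

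\textbf{Step 4: conclusion.} Proposition~\ref{thm:weak_convergence} gives $\mathcal{GP}(m_\K,k_\K)\rightharpoonup\mathcal{GP}(m,k)$ almost surely, with the limit interpreted as a weak limit. That limit equals the unique minimizer from Step 3, which proves the claim.
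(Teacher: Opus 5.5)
Your proposal is correct and follows essentially the same route as the paper: reduce to finite marginals via the supremum definition of the functional KL, show that the moment-matched Gaussian uniquely minimizes each marginal KL (your Pythagorean identity plays the role of the paper's strict-convexity-of-cross-entropy argument), and identify the weak limit from \Cref{thm:weak_convergence} with that minimizer. Your Step~3 is more careful than the paper, which simply asserts that a strict gap on one marginal $\bX^*$ gives a strictly larger supremum; what that step actually needs is your uniform gap $\delta$ on all $\bX \supseteq \bX_0$ (from monotonicity of KL under marginalization) together with finiteness of the optimal supremum, and absolute continuity alone does not guarantee that finiteness, which is why your separate treatment of the infinite case is needed.
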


\Cref{{thm:kl_divergence}} provides firm theoretical grounding for the use of Empirical GPs as a general modeling approach for learning from historical data. 

\begin{figure*}[ht]
    \centering
    \includegraphics[width=6.75in]{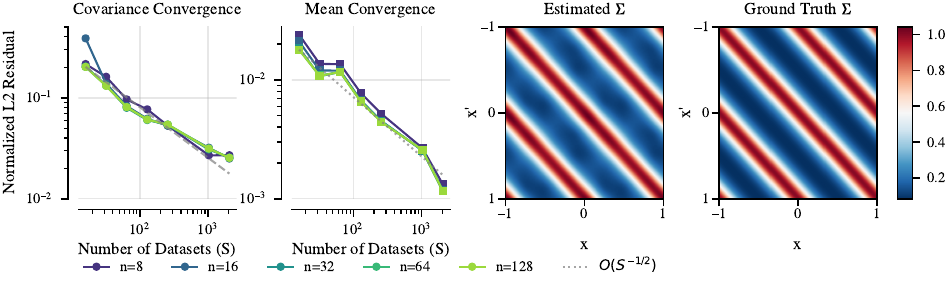}
    \caption{
    \textbf{Left:} Convergence of the EM-estimated prior mean and covariance to the ground truth, as a function of the number of independent incomplete datasets (\K), and for different numbers of observed data points per dataset (n).
    \textbf{Right:} The estimated covariance matrix ($\K = 1024, n = 64$) follows the ground truth closely.}
    \label{fig:em_convergence_discrete_case}
    \vspace{-1ex}
\end{figure*}





\subsection{Discrete Observations}

In practice, we cannot observe full continuous sample paths. Real-world data arrives as finite, discrete sets of observations $\{\mathbf{X}_i, \mathbf{y}_i\}$, where the input locations $\mathbf{X}_i$ may vary arbitrarily from sample to sample.

Important special cases are \textit{densely sampled and low-dimensional settings}—such as high-frequency time series—where we can approximate the continuous statistics via interpolation. Formally, for each sample $i$, we define an (e.g. linear) interpolant $\tilde{f}_i(\cdot) = \mathcal{I}(\cdot; \mathbf{X}_i, \mathbf{y}_i)$, and 
treat $\tilde{f}_i$ as fully observed realizations of the process, allowing us to evaluate the empirical covariance in~\eqref{eq:empirical_mean_and_kernel} directly at \textit{arbitrary} input locations.
Despite its simplicity, this heuristic leads to surprisingly strong results; as shown in Sec.~\ref{sec:gift_eval_experiments}, this interpolation-based Empirical GP outperforms several deep learning models on a time series forecasting benchmark.

\paragraph{Efficient Computation via SVD}
While evaluating~\eqref{eq:empirical_mean_and_kernel} is efficient, the cost scales linearly with the number of historical samples $\K$. To further accelerate inference for large datasets, we discretize the interpolants onto a shared reference grid 
$\mathbf{Z} = \{\mathbf{z}_1, \cdots, \mathbf{z}_M\}$
to form a matrix $\mathbf{Y} \in \mathbb{R}^{\K \times M}$ of centered observations. We then employ a lossless compression using an SVD,
$\mathbf{Y} = \mathbf{U} \mathbf{S} \mathbf{V}^\top$, 
constructing a reduced set of $M$ ``eigen-observations'' $\tilde{\mathbf{Y}} = \mathbf{S}\mathbf{V}^\top \in \mathbb{R}^{M \times M}$.
Interpolating the scaled eigen-observations $\tilde{v}_j(\cdot) = \mathcal{I}(\cdot; \mathbf{Z}, [\tilde{\mathbf{Y}}]_j)$ enables kernel evaluation at arbitrary locations while reducing complexity from $O(\K)$ to $O(M)$ by replacing the full dataset with $M$ basis functions.

\subsection{Continuous-Domain Empirical Gaussian Process}
\label{subsec:egps:em}

We now address the more general and challenging scenario where observations are sparse and irregularly distributed. In these regimes, geometric interpolation is ill-defined, and sample statistics cannot be computed directly. 
Instead, we employ a kernel-based mechanism to probabilistically project sparse data onto shared latent variables. This establishes a rigorous generative model that couples the disparate observation sets $\mathbf{X}_i$ through a unified latent structure.

To formalize this framework, we anchor the latent process on a fixed set of $M$ reference locations $\mathbf{Z} = \{\mathbf{z}_1, \ldots, \mathbf{z}_M\} \subset \mathcal{X}$.
We treat the corresponding function values $\mathbf{u} = f(\mathbf{Z}) \in \mathbb{R}^M$ as the latent variables.
For each historical sample $i$, we observe a vector $\mathbf{y}_i$ at a unique set of input locations $\mathbf{X}_i$ of size $N_i$. Our goal is to recover the population parameters $\boldsymbol{\mu}$ and $\boldsymbol{\Sigma}$ of the latent variables $\mathbf{u}$ that best explain the observations $\{\mathbf{X}_i, \mathbf{y}_i\}_{i=1}^{\K}$.

\paragraph{Interpolation Mechanism}
To relate the latent variables $\mathbf{u}$ at $\mathbf{Z}$ to observations at arbitrary locations $\mathbf{X}_i$, we use kernel interpolation based on a stationary base kernel $k_{\text{base}}(\cdot, \cdot)$. We define the weight matrix $\mathbf{W}_i \in \mathbb{R}^{N_i \times M}$ as
$\mathbf{W}_i~=~k_{\text{base}}(\mathbf{X}_i, \mathbf{Z}) \, k_{\text{base}}(\mathbf{Z}, \mathbf{Z})^{-1}$.

\paragraph{Probabilistic Model}
We assume the latent reference values $\mathbf{u}_i$ for each task are drawn from a shared multivariate normal prior:
\begin{equation}
\nonumber
    \mathbf{u}_i \sim \mathcal{N}(\boldsymbol{\mu}, \boldsymbol{\Sigma}), \quad i = 1, \ldots, \K.
\end{equation}

The observations $\mathbf{y}_i$ are modeled as the projected latent values with additive Gaussian noise:
\begin{equation}
\nonumber
    \mathbf{y}_i \mid \mathbf{u}_i \sim \mathcal{N}(\mathbf{W}_i \mathbf{u}_i, \sigma^2 \mathbf{I}),
\end{equation}
where $\sigma^2$ is the variance of the measurement noise. 
This formulation decouples the shared latent structure on $\mathbf{Z}$ from the task-specific observation locations $\mathbf{X}_i$, allowing us to aggregate information from spatially heterogeneous datasets.

\paragraph{E-step (Imputation)} 
We compute the posterior distribution of the latent variables $\mathbf{u}_i$ given the partial observations $\mathbf{y}_i$ and current parameter estimates $\boldsymbol{\mu}^{(t)}, \boldsymbol{\Sigma}^{(t)}$. This posterior is Gaussian:
\begin{equation}
\nonumber
    p(\mathbf{u}_i \mid \mathbf{y}_i) = \mathcal{N}(\mathbf{m}_i, \mathbf{C}_i).
\end{equation}
To streamline the notation, we define the \textit{cross-covariance} between the reference grid and the observation locations as $\bSigma_{\bZ, \bX_i} = \boldsymbol{\Sigma}^{(t)} \mathbf{W}_i^\top$. The moments are computed as:
\begin{equation}
\nonumber
    \begin{aligned}
            \mathbf{m}_i &= \boldsymbol{\mu}^{(t)} + \boldsymbol{\Sigma}_{\bZ, \bX_i}  \bSigma_{\bX_i, \bX_i} ^{-1} \left(
                \mathbf{y}_i - \mathbf{W}_i \boldsymbol{\mu}^{(t)}
            \right), \\
    \mathbf{C}_i &= \boldsymbol{\Sigma}^{(t)} - \boldsymbol{\Sigma}_{\bZ, \bX_i} \bSigma_{\bX_i, \bX_i}^{-1} \boldsymbol{\Sigma}_{\bX_i, \bZ},
    \end{aligned}
\end{equation}
where $\bSigma_{\bX_i, \bX_i}  = \mathbf{W}_i \bSigma \mathbf{W}_i^\top + \sigma^2 \mathbf{I}$ is the predicted covariance at the observed locations. This highlights that the update is a standard correction term weighted by the correlation between the latent grid and the specific observations.


\paragraph{M-step (Maximum Likelihood)} 
We update the parameters by maximizing the expected log-likelihood. The updates simply aggregate the sufficient statistics from all tasks:
\begin{equation}
\nonumber
    \begin{aligned}
        \boldsymbol{\mu}^{(t+1)} = \frac{1}{\K} \sum_{i=1}^{\K} \mathbf{m}_i, \qquad 
        \boldsymbol{\Sigma}^{(t+1)} = \frac{1}{\K} \sum_{i=1}^{\K} \mathbf{C}_i + \mathbf{\tilde m}_i \mathbf{\tilde m}_i^{\!\top},
\end{aligned}
\end{equation}
where $\mathbf{\tilde m}_i = \mathbf{m}_i - \boldsymbol{\mu}^{(t+1)}$.

\paragraph{Computational Complexity}
The computational cost of the EM iterations is determined by the size of the reference set $M$ and the per-task observation count $N_i$.
Computing the interpolation weights $\mathbf{W}_i$ requires $\mathcal{O}(N_i M^2)$.
In the E-step, forming the predictive covariance $\boldsymbol{\Sigma}_{\bX_i, \bX_i}$ and inverting it requires $\mathcal{O}(N_i M^2 + N_i^3)$ operations.
The M-step is dominated by the summation of the conditional covariances $\mathbf{C}_i$, which is $\mathcal{O}(\K M^2)$.
Thus, the algorithm's complexity is dominated by the E-step $\mathcal{O}(\K (N_iM^2 + N_i^3))$. For datasets with large $N_i > M$, one can exploit the Woodbury identity to reduce the inversion cost to $\mathcal{O}(M^3 + N_i M^2)$.

\paragraph{Residual Interpolation}

The kernel-interpolation described above restricts the learned correlations to a neighborhood of the reference set $\mathbf{Z}$. 
Consequently, at test locations $\mathbf{X}_*$ far from $\mathbf{Z}$ where the interpolation weights vanish ($\mathbf{W}_* \to \mathbf{0}$), the predictive variance collapses to the noise floor. This leads to exceedingly high confidence
in regions where the model should exhibit high epistemic uncertainty.
To address this, we employ an interpolation scheme 
of the learned residuals at the reference locations as:
\begin{equation}
\nonumber
    \boldsymbol{\delta}_{\mu} = \boldsymbol{\mu} - \mu_{\text{base}}(\mathbf{Z}), \qquad
    \boldsymbol{\delta}_{\Sigma} = \boldsymbol{\Sigma} - k_{\text{base}}(\mathbf{Z}, \mathbf{Z}).
\end{equation}
These quantities capture the structural deviations that the base parametric model fails to explain. To generalize to new inputs $\vx$, we project these residuals through the base kernel structure. The resulting Empirical GP prior at $\vx$ is given by:
\begin{equation}
\label{eq:residual_interpolant}
\begin{aligned}
    \mu(\vx) &= \mu_{\text{base}}(\vx) + \mathbf{W}_{\vx} \boldsymbol{\delta}_{\mu}, \\
    k(\vx, \vx') &= k_{\text{base}}(\vx, \vx') + \mathbf{W}_{\vx} \boldsymbol{\delta}_{\Sigma} \mathbf{W}_{\vx'}^\top,
\end{aligned}
\end{equation}
where $\mathbf{W}_{\vx} = k_{\text{base}}(\vx, \mathbf{Z}) k_{\text{base}}(\mathbf{Z}, \mathbf{Z})^{-1}$ are the interpolation weights for the new locations.
This formulation guarantees two desirable properties. First, consistency: if $\mathbf{X}_* = \mathbf{Z}$, the moments exactly recover the EM estimates $\boldsymbol{\mu}$ and $\boldsymbol{\Sigma}$. Second, robust extrapolation: far from the historical data where $\mathbf{W}_{\vx} \to \mathbf{0}$, the model gracefully reverts to the base kernel behavior, ensuring appropriate uncertainty estimates in unseen regions,
a behavior that can be seen in Figure~\ref{fig:em_gp_interp}.


\begin{figure*}[ht!]
    \centering
    \includegraphics[width=6.75in]{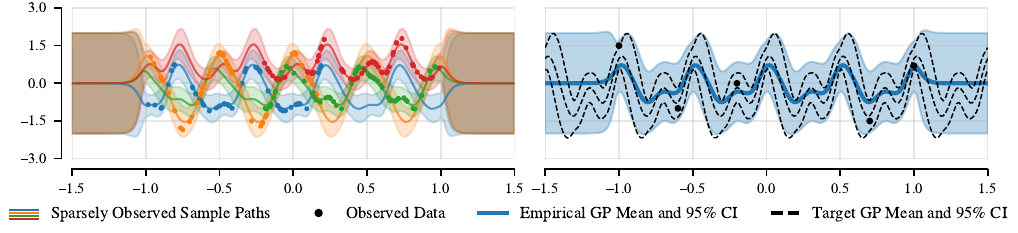}
    \caption{
    \textbf{Left:} The EM-inferred conditional distributions 
    for each set of independent historical observations (points).
    The historical inputs are sampled uniformly from one of two continuous intervals: (-1, 0.2) and (-0.2, 1). The EM-inferred conditional distributions exhibit the periodic characteristics beyond the respective interval associated with a given set of historical observations, and revert gracefully to the base kernel (Matérn) beyond the range of any historical observations (-1, 1).
    \textbf{Right:} The posterior distribution associated with the EM-inferred GP prior (blue), compared to the ground truth posterior (dashed). The Empirical GP closely follows the ground truth within the interval of historical observations (-1, 1), and reverts back to the base kernel's behavior beyond.
    }
    \label{fig:em_gp_interp}
    \vspace{-1ex}
\end{figure*}

\subsection{Relation to Similar Approaches}
\label{subsec:egps:relations}
\citet{schwaighofer2004learningGPkernels} similarly employ an EM for non-parametric covariance learning, compared to which our approach introduces two critical advancements. 
First, \citet{schwaighofer2004learningGPkernels} rely on a fixed-grid discretization, while the probabilistic interpolation mechanism above enables training directly on spatially heterogeneous, continuous observations without alignment. 
Second, we address the \textit{variance starvation} inherent in their Nystr\"{o}m-based prediction, where uncertainty collapses to zero away from the reference set, ensuring robust uncertainty quantification that reverts gracefully to the base prior in unexplored regions.


HyperBO~\citep{wang2024hyperbo} optimizes a Deep Kernel GP ($k_{\theta}(\mathbf{x}, \mathbf{x}') = k_{\nu}(\phi_\theta(\mathbf{x}), \phi_\theta(\mathbf{x}'))$) via gradient descent.
While expressive, the learned correlations are implicit within the network weights $\theta$, making the prior difficult to interpret.
In contrast, our EM approach explicitly estimates the covariance matrix $\boldsymbol{\Sigma}$ on reference inputs,
and enables stable, closed-form updates, avoiding the extensive iterations and instabilities of gradient-based methods.

\section{Experiments}
\label{sec:Experiments}
We first qualitatively demonstrate that Empirical GPs recover the behavior of handcrafted kernels directly from data. 
Next, we verify our theoretical results, showing that Empirical GPs converge to the target stochastic process when it is Gaussian (see \Cref{thm:weak_convergence} and \Cref{thm:posterior_convergence}). 
We then present a quantitative evaluation on the GIFT-Eval time series forecasting benchmark \citep{aksu2024gifteval}, followed by learning curve extrapolation using data from LCBench \citep{zimmer2021auto}. Our Empirical GP model is implemented in BoTorch~\citep{balandat2020botorch} and will be made publicly available upon publication.

\subsection{Capturing the Behavior of Handcrafted Kernels}
\label{sec:handcrafted_kernels}

Our first task considers S\&P 500 stock market data.
The canonical human-expert model for this type of data is geometric Brownian motion, which assumes that the logarithm of the stock price follows a random walk with drift, $dS_t = \mu S_t dt + \sigma S_t dW_t$, 
where $S_t$ is the stock price at time~$t$, $\mu$~is the expected rate of return, $\sigma$ is the volatility, and $W_t$ is Brownian motion.
For our experiment, we calculate the parameters $\mu$ and $\sigma$ on the same historical data which is used by our Empirical GP.

The second task considers the Mauna Loa carbon dioxide concentration dataset~\citep{thoning2025co2}.
We implement a custom kernel which was handcrafted for this dataset by a human expert \citep{rasmussen2024co2model}.
It consists of three additive components which respectively model the trend, seasonality, and noise. 
The trend component is a sum of once, twice and thrice integrated white noise, the seasonal contribution is modeled by a product of a periodic kernel with a Gaussian RBF kernel, and the noise kernel consists of a Gaussian RBF kernel with additive homoskedastic noise.

Since both tasks consist of extrapolating a single time series, we use sliding windows to extract additional subseries which can be used as historical data for Empirical GP.
For the financial data, we extracted subseries between Jan 1st, 1930 and Jan 1st 2010, and for the climate data, we considered the time span from Jan 1st, 1975 to Jan 1st, 2010. Intuitively, this procedure assumes self-similarity in the underlying process, which, for example, is indeed the case for geometric Brownian motion.
Additionally, for the financial data, we apply a log transform and align subseries to start at zero, to reflect its exponential nature. In other settings where multiple data sets from the same process are available (e.g. learning curves, climate data from similar geographic locations) this procedure is unnecessary.

\Cref{fig:empirical_vs_handcrafted} illustrates that the Empirical GP recovers the behavior of the handcrafted kernels.
On the stock market data, the Empirical GP almost perfectly matches geometric Brownian motion, despite not being (explicitly) aware of the model and its ground truth parameters.
In light of \Cref{thm:kl_divergence}, this can be interpreted as validating geometric Brownian motion as the best possible Gaussian approximation for stock market data.
On the atmospheric climate data, the Empirical GP implicitly captures seasonality and the upwards trend, without any explicit inductive bias.
Furthermore, compared to the handcrafted kernel, the Empirical GP achieves 21.52\% lower RMSE and has 14.11\% higher likelihood of generating the ground truth data in the prediction window.

\subsection{Convergence to Target Gaussian Processes}
\label{sec:target_gp}
\Cref{thm:weak_convergence} suggests that, as the number of historical sample paths increases, the Empirical GP converges to the underlying stochastic process if that process is Gaussian.
Furthermore, \Cref{thm:posterior_convergence} states that the corresponding posteriors on new observations also match.
We verify these claims empirically by drawing synthetic sample paths from various target GPs with different ground truth covariance functions, namely linear, quadratic, Gaussian RBF, and periodic.
Using these synthetic sample paths as historical data for the Empirical GP, we compare its posterior to the posteriors of the target GPs on a fixed set of new observations.

\Cref{fig:target_gp} shows that the Empirical GP makes distinct predictions on the same fixed set of new observations when given different sample paths from different target GPs as historical data.
In particular, the Empirical GP posterior closely matches the posterior of the corresponding target GP.
This demonstrates both the data-driven adaptivity and convergence properties of the Empirical GP.

\subsection{GIFT-Eval Time Series Forecasting Benchmark}
\label{sec:gift_eval_experiments}

\begin{figure*}[!ht]
    \centering
    \includegraphics[width=6.75in]{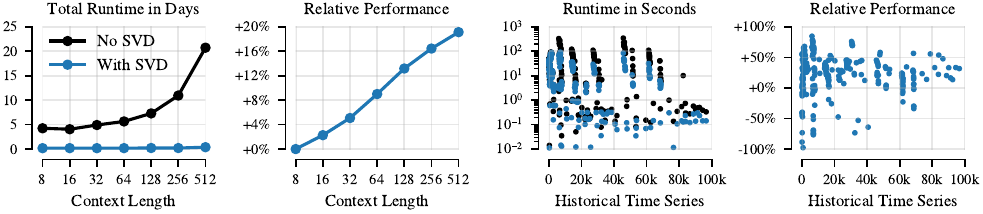}
    \caption{Runtime and performance statistics of Empirical GP on the GIFT-Eval time series forecasting benchmark. 
    \textbf{Left:} As the context length increases, SVD acceleration significantly decreases runtime to a virtually negligible amount. 
    \textbf{Left middle:} Increasing the context length consistently improves the performance of Empirical GP relative to itself with a shorter context length.
    \textbf{Right middle:} SVD acceleration delivers consistent runtime improvements for varying amounts of historical data. 
    \textbf{Right:} Increasing the amount of historical data tends to improve and stabilize the performance of Empirical GP relative to the Seasonal Naive baseline.}
    \label{fig:gift_eval}
\end{figure*}

\input{tab/gift_eval}

For a quantitative evaluation, we consider the GIFT-Eval time series forecasting benchmark~\citep{aksu2024gifteval}, which consists of 97 datasets, spanning seven domains, ten frequencies, and a total of 144,000 time series with 177 million data points.
The seven domains are Econ/Fin, Energy, Healthcare, Nature, Sales, Transport, and Web/CloudOps, and each dataset has a fixed prediction length which is associated with short, medium, or long-term forecasts.
We do not consider the additionally provided pre-training dataset.

We consider the same baseline methods as \citet{aksu2024gifteval}, incorporating the statistical models Naive, Seasonal Naive \citep{hyndman2018forecasting}, Auto Arima, Auto ETS, and Auto Theta \citep{garza2022statsforecast}.
We also include a GP with Gaussian RBF kernel and a GP using the spectral mixture kernel with four components \citep{wilson2013gaussian}.
Furthermore, following \citet{aksu2024gifteval}, we also include eight deep learning baselines, namely DeepAR \citep{salinas2020deepar}, TFT \citep{lime2021tft}, TiDE \citep{das2023tide}, N-BEATS \citep{oreshkin2020nbeats}, PatchTST \citep{nie2023patchtst}, DLinear \citep{zeng2023transformers}, Crossformer \citep{zhang2023crossformer}, and iTransformer \citep{liu2024itransformer}.

Due to the heterogeneity of datasets in the GIFT-Eval benchmark, we extract subseries to be used as historical data using a sliding window approach akin to \Cref{sec:handcrafted_kernels}.
In particular, we set the window size to be the sum of context length and prediction length, where the former is a hyperparameter and the latter is specific to each dataset.
Furthermore, we extract independent sets of subseries for each dataset and set the maximum number of subseries per dataset to 100,000.
Subseries are selected uniformly at random without replacement, with frequencies proportional to the lengths of individual time series.
In other words, we extract more subseries from longer time series in the training data and fewer from shorter ones.
For multivariate time series, we extract independent sets of subseries corresponding to each output.
To obtain a fair comparison to GP baselines, we optimize kernel hyperparameters, such as lengthscales, amplitudes, and spectral mixture weights, on the same set of subseries which is used as historical data for Empirical GP.

Following \citet{aksu2024gifteval}, we evaluate models by ranking them based on their continuous ranked probability score (CRPS) relative to the Seasonal Naive baseline.
\Cref{tab:gift_eval} reports the mean and standard error of average ranks across models, per domain and overall.
Notably, the Empirical GP achieves the highest overall average rank across the category of statistical models, and --surprisingly -- outperforms four out of the eight deep learning models.


We attribute this competitiveness to the Empirical GP's explicit modeling of the autocorrelation patterns that dominate time series data. Conversely, deep learning (DL) baselines trained from scratch must implicitly learn these dependencies, often resulting in sample inefficiency and optimization instability. While massive pre-trained foundation models define the state-of-the-art, our results highlight that the Empirical GP remains surprisingly competitive to canonical DL baselines, {\it trained on the same data},
despite requiring no gradient-based optimization—offering a robust and exceedingly simple complement to complex neural architectures.

\Cref{fig:gift_eval} illustrates the effects of varying the context length and the number of historical subseries.
Without SVD acceleration from \Cref{subsec:egps:em}, the runtime of the Empirical GP increases significantly as the context length increases. 
Using SVD acceleration results in virtually negligible runtime.
Additionally, increasing the context length improves the performance of the Empirical GP. 
However, given a fixed amount of training data, increasing the context length implies fewer historical subseries, resulting in a trade-off.
In terms of the number of historical subseries, \Cref{fig:gift_eval} shows that SVD acceleration leads to consistent runtime improvements across various numbers of subseries, and that performance tends to improve and become more stable as the amount of historical data increases.

\subsection{Learning Curve Extrapolation on LCBench}
\label{sec:lcbench_experiments}
To demonstrate that Empirical GPs can also leverage sparsely observed historical data,
we next consider the task of learning curve extrapolation.
This may naturally feature sparsely observed historical data, e.g., due to infrequent logging or early stopping of jobs. 
We simulate this setting using data from LCBench~\citep{zimmer2021auto}, which provides learning curve data for neural networks trained for 50 epochs across 35 different datasets with 2,000 randomly sampled hyperparameter configurations per dataset.

For each dataset, we extrapolate partially observed learning curves on a test split with 1,000 examples.
From the remaining curves, we construct historical data for the Empirical GP.
To simulate sparse historical data due to early stopping, we provide 60\% of historical curves as fully observed and truncate the remaining 40\% uniformly at random between 20\% to 80\% of the total number of epochs.

We compare Empirical GPs against extrapolation via power laws and LC-PFNs~\citep{adriaensen2023efficient}, a foundation model specifically pre-trained for learning curve extrapolation. In addition, we include the naive baseline of predicting the last observed value, which converges to the ground truth as the observed fraction approaches 100\%.

\Cref{fig:lcbench_relative} shows results on the validation set in terms of RMSE and CRPS. The Empirical GP clearly outperforms all baselines both in terms of mean prediction and uncertainty calibration, especially in the regime when only a small fraction of the learning curve has been observed. 

\begin{figure}[ht]
    \centering
    \includegraphics[width=\columnwidth]{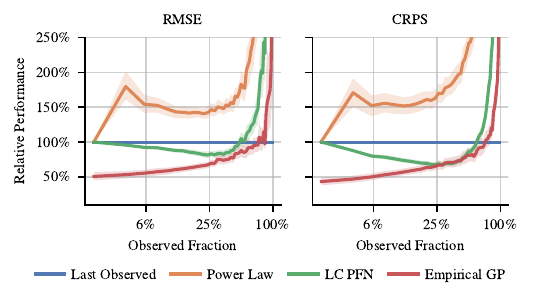}
    \caption{Performance metrics, RMSE (left) and CRPS (right), relative to the ``last observed'' baseline (lower is better), aggregated across all datasets, on the LCBench learning curve extrapolation problem.}
    \label{fig:lcbench_relative}
\end{figure}

\Cref{fig:lcbench_rank} ablates the performance of the Empirical GP as a function of both the number of historical learning curves available and their completeness (proportion that are fully observed). As expected, greater completeness leads to better prediction quality. Further, we observe approximate log-linear improvement in average percentile rank as a function of the number of historical curves. 

\begin{figure}[ht!]
    \centering
    \includegraphics[width=\columnwidth]{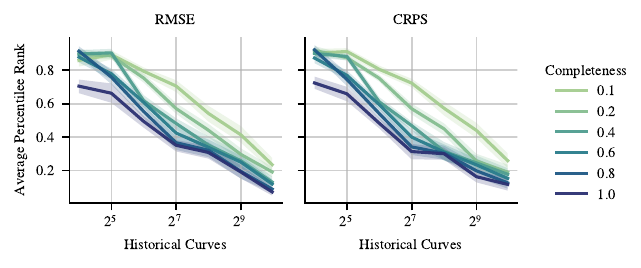}
    \caption{Percentile ranks of performance metrics RMSE (left) and CRPS (right) at 40\% observed, aggregated across all datasets, on the LCBench learning curve extrapolation problem.}
    \label{fig:lcbench_rank}
    \vspace{-2ex}
\end{figure}

\section{Conclusion}
\label{sec:Conclusion}


We studied Empirical Gaussian Processes, a principled framework for learning non-parametric GP priors from related datasets. Our work advances both the theoretical understanding and practical applicability of this approach.

We provide the theoretical foundation for this methodology, proving that the Empirical GP converges to the best Gaussian approximation of the true data-generating process. Our method provides closed-form updates that avoid compute-intensive neural-network based techniques such as HyperBO and Deep Kernel Learning. In the general case of non-shared sample locations, our EM algorithm typically converges in just a few iterations and scales linearly in the number of data sets, enabling computationally efficient learning from large collections of historical data.
For the practically relevant case of dense observations, we show that the EM algorithm is, in fact, unnecessary, and that we achieve strong modeling performance through a simpler algorithm. These results hold even when using interpolation to arrange non-uniform observations on a grid. 

Our experiments on time series forecasting and learning curve extrapolation demonstrate that Empirical GPs achieve performance that is competitive with multiple recent deep learning baselines on practically highly relevant problems while requiring orders of magnitude less training time.


\paragraph{Limitations and Future Work} Empirical GPs occupy a sweet spot between
settings with scarce and abundant historical data.
In the former, Empirical GPs may find it challenging to effectively estimate the prior, necessitating stronger parametric assumptions. 
In the latter, foundation models trained on massive datasets can achieve higher expressiveness and performance -- though they may struggle with data that is out of the pre-training distribution due to limits of in-context learning.

\clearpage

\section*{Acknowledgements}
We thank Samuel Müller for helpful discussions and insights about the GIFT-Eval benchmark.




\bibliography{references}
\bibliographystyle{icml2026}

\newpage
\appendix
\onecolumn

\section{Theory}
\label{apd:theory}

\subsection{Function-Space View}
In this section, we provide derivations for our theoretical results from \Cref{sec:egps}.

Let $(\domain, \rho)$ be a compact metric space.
Let $C(\domain)$ denote the separable Banach space of continuous real-valued functions on $\domain$, equipped with the supremum norm $\Vert h \Vert_{\infty} = \sup_{\vx \in \domain} | h(\vx) |$ for all $h \in C(\domain)$.
Let $f$ be drawn from a stochastic process indexed by $\domain$ with mean function $m(\vx) = \mathbb{E}[f(\vx)]$ and covariance function $k(\vx, \vx') = \mathrm{Cov}(f(\vx), f(\vx'))$.
We assume that $f$ has almost surely continuous sample paths on $\domain$ and finite second moments $\mathbb{E}[ \Vert f \Vert_{\infty}^2 ] < \infty$.
Note that, the latter is implied by Fernique's theorem \citep{fernique1970} if $f$ follows a \emph{Gaussian} process.
Let $f_1, ..., f_\K$ be $\K$ \iid sample paths of $f$.
We define the \emph{empirical} mean function $m_\K$ and the \emph{empirical} covariance function $k_{\K}$ as
\begin{align*}
    m_\K(\vx)
    = \frac{1}{\K} \sum_{i=1}^{\K} f_i(\vx), &&
    k_\K(\vx, \vx')
    = \frac{1}{\K} \sum_{i=1}^{\K} (f_i(\vx) - m_\K(\vx))(f_i(\vx') - m_\K(\vx')).
\end{align*}
We first show that the empirical mean and covariance functions almost surely converge uniformly to their true counterparts.
\begin{restatable}{lemma}{MeanCovarConvergence}
\label{thm:mean_covar_convergence}
Suppose that $f$ has almost surely continuous sample paths on $\domain$ and finite second moments $\mathbb{E}[ \Vert f \Vert_{\infty}^2 ] < \infty$. 
Note that, the latter is implied by Fernique's theorem \citep{fernique1970} if $f$ is a \emph{Gaussian} process. In the limit of $\K \to \infty$, we have
\begin{equation*}
    \Vert m_\K - m \Vert_{\infty} \xrightarrow{\mathrm{a.s.}} 0
    \quad \text{and} \quad
    \Vert k_\K - k \Vert_{\infty} \xrightarrow{\mathrm{a.s.}} 0,
\end{equation*}
that is, the empirical mean and covariance function almost surely converge uniformly to their true counterparts.
\end{restatable}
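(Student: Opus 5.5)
The plan is to apply the strong law of large numbers in the separable Banach spaces $C(\domain)$ and $C(\domain\times\domain)$ (Mourier's SLLN). Both are separable because $\domain$, and hence $\domain\times\domain$, is a compact metric space. Since $f$ has almost surely continuous paths, we may treat $f$ as a Borel random element of $C(\domain)$. Borel measurability follows from separability: the Borel $\sigma$-algebra is generated by the point evaluations. The sample paths $f_1,\dots,f_\K$ are then \iid random elements of $C(\domain)$. Mourier's theorem says that if $X_i$ are \iid random elements of a separable Banach space with $\mathbb{E}\Vert X_1\Vert<\infty$, then $\frac1\K\sum_{i=1}^\K X_i\to\mathbb{E}X_1$ almost surely in norm, where $\mathbb{E}X_1$ is the Bochner expectation.

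For the mean, we take $X_i=f_i$. By Jensen, $\mathbb{E}\Vert f\Vert_\infty\le(\mathbb{E}\Vert f\Vert_\infty^2)^{1/2}<\infty$, so the Bochner mean exists. Evaluation at $\vx$ is a continuous linear functional, so it commutes with the Bochner integral, and the Bochner mean equals the pointwise mean $m$. In particular $m\in C(\domain)$. Mourier's SLLN then gives $\Vert m_\K-m\Vert_\infty\to0$ almost surely.

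For the covariance, we use the decomposition
\begin{equation*}
k_\K(\vx,\vx')=\frac1\K\sum_{i=1}^\K f_i(\vx)f_i(\vx')-m_\K(\vx)m_\K(\vx').
\end{equation*}
Define $G_i=f_i\otimes f_i\in C(\domain\times\domain)$, $G_i(\vx,\vx')=f_i(\vx)f_i(\vx')$. The map $h\mapsto h\otimes h$ from $C(\domain)$ to $C(\domain\times\domain)$ is continuous, so the $G_i$ are \iid random elements. Moreover $\Vert G_i\Vert_\infty=\Vert f_i\Vert_\infty^2$, which is integrable by the second-moment assumption. This is exactly where $\mathbb{E}\Vert f\Vert_\infty^2<\infty$ is needed. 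Mourier's SLLN again gives $\frac1\K\sum_i G_i\to R$ uniformly almost surely, where $R(\vx,\vx')=\mathbb{E}[f(\vx)f(\vx')]$; the identification is again by pointwise evaluation.

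For the second term,
\begin{equation*}
\Vert m_\K\otimes m_\K-m\otimes m\Vert_\infty\le(\Vert m_\K\Vert_\infty+\Vert m\Vert_\infty)\Vert m_\K-m\Vert_\infty\to0
\end{equation*}
almost surely, because $\Vert m_\K\Vert_\infty$ stays bounded on the almost-sure event. Intersecting the two probability-one events and using $k=R-m\otimes m$ yields $\Vert k_\K-k\Vert_\infty\to0$ almost surely.

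The main obstacle is justifying the Banach-space SLLN setup: measurability of $f$ as a $C(\domain)$-valued random element, and identifying the Bochner expectation with the pointwise moments. Separability of $C(\domain)$ for compact metric $\domain$ handles both. If one prefers to avoid Mourier's theorem, an alternative is a uniform SLLN via bracketing. By dominated convergence with envelope $\Vert f\Vert_\infty$, or $\Vert f\Vert_\infty^2$ for the products, the modulus $\mathbb{E}\sup_{\rho(\vx,\vy)<\delta}|f(\vx)-f(\vy)|\to0$. Combined with a finite $\delta$-net of the compact set $\domain$ and the scalar SLLN at the net points, this gives the uniform statement.
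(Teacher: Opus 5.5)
Your proposal is correct and takes essentially the same route as the paper: Mourier's strong law in $C(\domain)$ for the mean, then again in $C(\domain\times\domain)$ for $f\otimes f$ using $\Vert f\otimes f\Vert_\infty=\Vert f\Vert_\infty^2$, and finally continuity of the product to handle $m_\K\otimes m_\K$ before subtracting. Your extra remarks on Borel measurability, on identifying the Bochner mean with the pointwise moments, and your explicit bound for $\Vert m_\K\otimes m_\K-m\otimes m\Vert_\infty$ supply details that the paper leaves implicit.
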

\begin{proof}
First, note that the sample paths $f_1, ..., f_\K$ are \iid random elements taking values in the separable Banach space $C(\domain)$.
Applying Mourier's strong law of large numbers \citep{mourier1953}, we obtain
\begin{equation*}
    \left\Vert \frac{1}{\K} \sum_{i=1}^\K f_i - \mathbb{E}[f] \right\Vert_{\infty} \xrightarrow{\mathrm{a.s.}} 0
    \quad \text{in } C(\domain).
\end{equation*}
Since $\mathbb{E}[f(\vx)] = m(\vx)$, we have $\Vert m_\K - m \Vert_{\infty} \xrightarrow{\mathrm{a.s.}} 0$.

For the empirical covariance function, we consider the random element $h = f \otimes f$, defined by $h(\vx, \vx') = f(\vx)f(\vx')$, where $f \in C(\domain)$ implies $h \in C(\domain \times \domain)$.
Additionally, since $\domain$ is a compact metric space, $\domain \times \domain$ is also a compact metric space, implying that $C(\domain \times \domain)$ is a separable Banach space with supremum norm
\begin{equation*}
    \Vert h \Vert_{\infty}
    = \sup_{(\vx, \vx') \in \domain \times \domain} | h(\vx, \vx') |
    = \Vert f \Vert_{\infty}^2.
\end{equation*}
Since $\mathbb{E} [\Vert f \Vert_{\infty}^2] < \infty$, we have $\mathbb{E} [\Vert h \Vert_{\infty}] < \infty$.
By applying Mourier's strong law of large numbers \citep{mourier1953} again, we obtain
\begin{equation*}
    \left\Vert \frac{1}{\K} \sum_{i=1}^\K h_i - \mathbb{E}[h] \right\Vert_{\infty} \xrightarrow{\mathrm{a.s.}} 0
    \quad \text{in } C(\domain \times \domain),
\end{equation*}
with $\mathbb{E}[h(\vx, \vx')] = \mathbb{E}[f(\vx)f(\vx')] = k(\vx, \vx') + m(\vx) m(\vx')$.

Since the mapping $(u, v) \mapsto u \otimes v$ from $C(\domain) \times C(\domain) \to C(\domain \times \domain)$ is continuous, $\Vert m_\K - m \Vert_{\infty} \xrightarrow{\mathrm{a.s.}} 0$ implies
\begin{equation*}
    m_\K(\vx) m_\K(\vx') \xrightarrow{\mathrm{a.s.}} m(\vx) m(\vx')
    \quad \text{uniformly on } \domain \times \domain,
\end{equation*}
via the continuous mapping theorem.
Combining the two results above with the continuity of subtraction yields
\begin{equation*}
    k_\K \xrightarrow{\mathrm{a.s.}} (k + m \otimes m) - (m \otimes m) = k \quad \text{uniformly on } \domain \times \domain,
\end{equation*}
which gives the claim.
\end{proof}
For any finite set of points $\vx_1, ..., \vx_N \in \domain$, we define
\begin{align*}
    \mathbf{m} &=
    \begin{bmatrix}
        m(\vx_i)
    \end{bmatrix}_{i=1}^N, &
    \mathbf{m}_\K &=
    \begin{bmatrix}
        m_\K(\vx_i)
    \end{bmatrix}_{i=1}^N, &
    \mathbf{K} &=
    \begin{bmatrix}
        k(\vx_i, \vx_j)
    \end{bmatrix}_{i,j=1}^N, &
    \mathbf{K}_\K &=
    \begin{bmatrix}
        k_\K(\vx_i, \vx_j)
    \end{bmatrix}_{i,j=1}^N.
\end{align*}
Note that $k_\K$ is a valid covariance function, because $\mathbf{K}$ is positive semi-definite, that is, for any $\mathbf{u} \in \mathbb{R}^n$
\begin{equation*}
    \mathbf{u}^\mathsf{T} \mathbf{K}_\K \mathbf{u}
    = \frac{1}{\K} \sum_{i=1}^\K \biggl( \sum_{j=1}^N u_j ( f_i(\vx_j) - m_\K(\vx_j) ) \biggr)^2 \geq 0.
\end{equation*}
Next, we show that the finite-dimensional multivariate normal random variable $\mathbf{g}_S \sim \mathcal{N}(\mathbf{m}_S, \mathbf{K}_S)$, induced by the empirical mean and covariance functions, converges in distribution to $\mathbf{g} \sim \mathcal{N}(\mathbf{m}, \mathbf{K})$, the corresponding multivariate normal random variable induced by the true mean and covariance functions.
\begin{restatable}{lemma}{FiniteDistConvergence}
\label{thm:finite_dist_convergence}
For almost every sequence of sample paths $\{ f_i \}_{i=1}^S$ and any finite set of points $\vx_1, ..., \vx_N \in \domain$,
\begin{equation*}
    \mathbf{g}_S \xrightarrow{d} \mathbf{g} \quad \text{as} \quad S \to \infty,
\end{equation*}
that is, the multivariate normal random variable induced by the empirical mean and covariance functions converges in distribution to the corresponding multivariate normal random variable induced by the true mean and covariance function.
\end{restatable}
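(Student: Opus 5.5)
The plan is to reduce the claim to pointwise convergence of the finite-dimensional parameters and then use characteristic functions. First, fix the almost-sure event $\Omega_0$ on which the conclusion of \Cref{thm:mean_covar_convergence} holds, namely $\Vert m_\K - m \Vert_\infty \to 0$ and $\Vert k_\K - k\Vert_\infty \to 0$. This event does not depend on the choice of points $\vx_1,\dots,\vx_N$. That is what makes the statement hold \emph{simultaneously} for every finite set of points for almost every sequence $\{f_i\}$, rather than for a set-dependent null set.

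On $\Omega_0$, fix any $\vx_1,\dots,\vx_N$. Uniform convergence implies entrywise convergence, $|m_\K(\vx_i) - m(\vx_i)| \le \Vert m_\K - m\Vert_\infty$ and $|k_\K(\vx_i,\vx_j) - k(\vx_i,\vx_j)| \le \Vert k_\K - k\Vert_\infty$. Hence $\mathbf{m}_\K \to \mathbf{m}$ in $\mathbb{R}^N$ and $\mathbf{K}_\K \to \mathbf{K}$ in $\mathbb{R}^{N\times N}$. Both $\mathbf{K}_\K$ and $\mathbf{K}$ are positive semi-definite, by the computation preceding the lemma and by the fact that $k$ is a covariance function. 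So the Gaussian laws $\mathcal{N}(\mathbf{m}_\K,\mathbf{K}_\K)$ and $\mathcal{N}(\mathbf{m},\mathbf{K})$ are well defined, possibly degenerate.

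Next, use L\'evy's continuity theorem. The characteristic function of $\mathbf{g}_\K$ is
\begin{equation*}
\varphi_\K(\mathbf{t}) = \exp\bigl(i\,\mathbf{t}^\top \mathbf{m}_\K - \tfrac12 \mathbf{t}^\top \mathbf{K}_\K \mathbf{t}\bigr),
\end{equation*}
and this formula remains valid for singular covariance matrices. For each fixed $\mathbf{t}\in\mathbb{R}^N$, the exponent converges to $i\,\mathbf{t}^\top\mathbf{m} - \tfrac12\mathbf{t}^\top\mathbf{K}\mathbf{t}$, since it is a continuous function of $(\mathbf{m}_\K,\mathbf{K}_\K)$. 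Because $\exp$ is continuous, $\varphi_\K(\mathbf{t}) \to \varphi(\mathbf{t})$ pointwise, where $\varphi$ is the characteristic function of $\mathbf{g}\sim\mathcal{N}(\mathbf{m},\mathbf{K})$. L\'evy's theorem then gives $\mathbf{g}_\K \xrightarrow{d} \mathbf{g}$.

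I do not expect a real obstacle. The only points needing care are two. First, possible degeneracy of $\mathbf{K}$ or $\mathbf{K}_\K$ rules out arguing via densities, which is why characteristic functions are the right tool. Second, the quantifiers must be ordered correctly: a single null set is chosen first, via \Cref{thm:mean_covar_convergence}, and the points are chosen afterwards.
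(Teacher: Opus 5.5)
Your proposal is correct and follows essentially the same route as the paper: you get entrywise convergence of $\mathbf{m}_\K$ and $\mathbf{K}_\K$ from \Cref{thm:mean_covar_convergence}, then pointwise convergence of the Gaussian characteristic functions, then L\'evy's continuity theorem. Your two extra remarks are careful additions that the paper's proof leaves implicit: that a single null set is fixed before the points are chosen, and that the characteristic-function argument also covers possibly singular covariance matrices.
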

\begin{proof}
The characteristic function of $\mathbf{g}_S \sim \mathcal{N}(\mathbf{m}_S, \mathbf{K}_S)$ is given by
\begin{equation*}
    \varphi_\K(\mathbf{t}) = \exp \left( i \mathbf{t}^{\mathsf{T}} \mathbf{m}_\K - \frac{1}{2} \mathbf{t}^{\mathsf{T}} \mathbf{K}_\K \mathbf{t}  \right), \quad \forall \mathbf{t} \in \mathbb{R}^N.
\end{equation*}
Applying \Cref{thm:mean_covar_convergence}, we obtain element-wise convergence of $\mathbf{m}_S \to \mathbf{m}$ and $\mathbf{K}_\K \to \mathbf{K}$ for almost every realization of the sample paths.
By continuity of $\varphi_\K$, we obtain pointwise convergence of
\begin{equation*}
    \varphi_\K(\mathbf{t}) \to \varphi(\mathbf{t})
    = \exp \left( i \mathbf{t}^{\mathsf{T}} \mathbf{m} - \frac{1}{2} \mathbf{t}^{\mathsf{T}} \mathbf{K} \mathbf{t}  \right), \quad \forall \mathbf{t} \in \mathbb{R}^N,
\end{equation*}
for almost every realization of $\{f_i\}_{i=1}^S$, as $\K \to \infty$.
Observing that $\varphi$ is the characteristic function of $\mathbf{g} \sim \mathcal{N}(\mathbf{m}, \mathbf{K})$ and continuous at $\mathbf{t} = \mathbf{0}$, we apply Lévy's continuity theorem \citep{levy1925} and conclude the claim.
\end{proof}
Let $\mathcal{GP}\bigl(m_\K, k_\K\bigr)$ be a GP defined by the empirical mean function $m_\K$ and empirical covariance function $k_\K$, conditioned on observed sample paths $f_1, ..., f_\K$.
We refer to $\mathcal{GP}\bigl(m_\K, k_\K\bigr)$ as Empirical GP.
Additionally, let $\mathcal{GP}\bigl(m, k)$ be a Gaussian process defined by the true mean function $m$ and true covariance function $k$.
We refer to $\mathcal{GP}\bigl(m, k)$ as \emph{limiting} GP.
\WeakConvergence*
\begin{proof}
To establish the weak convergence of $\mathcal{GP}\bigl(m_\K, k_\K\bigr) \rightharpoonup \mathcal{GP}\bigl(m, k)$ in $C(\domain)$, we must verify the convergence of finite-dimensional distributions and the tightness of the sequence of measures induced by $\{ \mathcal{GP}\bigl(m_\K, k_\K\bigr) \}_{\K=1}^\infty$.
The convergence of finite-dimensional distributions is established directly by \Cref{thm:finite_dist_convergence}.
For tightness in $C(\mathcal X)$, Dudley’s condition implies $\mathcal{GP}\bigl(m, k)$ is supported on $C(\mathcal X)$.
Uniform convergence $k_\K\to k$, established by \Cref{thm:mean_covar_convergence}, implies $d_{k_\K}\to d_k$ uniformly, and thus the entropy integrals of $(\mathcal X,d_{k_\K})$
are uniformly bounded for all sufficiently large $\K$.
By Dudley’s theorem \citep{dudley1967}, $\{ \mathcal{GP}\bigl(m_\K, k_\K\bigr) \}_{S=1}^\infty$ is tight in $C(\mathcal X)$.
Finally, convergence of finite-dimensional distributions plus tightness yields weak convergence in $C(\mathcal X)$ via Prokhorov's theorem \citep{prokhorov1956}.
\end{proof}
This weak convergence implies weak convergence of posteriors under Bayesian inference.
\begin{restatable}{corollary}{PosteriorConvergence}
\label{thm:posterior_convergence}
Given a dataset $\mathcal{D}$, for almost every sequence of sample paths $\{ \stochprocess_i \}_{i=1}^\K$, $\mathcal{GP}\bigl(m_\K, k_\K\bigr) \!\mid\! \mathcal{D} \rightharpoonup \mathcal{GP}\bigl(m, k\bigr) \!\mid\! \mathcal{D}$ as $\K \to \infty$,
that is, the posterior of $\mathcal{GP}\bigl(m_\K, k_\K\bigr)$ conditioned on $\mathcal{D}$ converges weakly to the posterior of $\mathcal{GP}\bigl(m, k\bigr)$ in $C(X)$.
\end{restatable}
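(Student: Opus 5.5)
We take $\mathcal{D} = \{(\vx_j, y_j)\}_{j=1}^N$ with Gaussian likelihood $y_j = g(\vx_j) + \varepsilon_j$, where $\varepsilon_j \sim \mathcal{N}(0,\sigma^2)$ and $\sigma^2 > 0$. We fix a realization of the sample paths in the probability-one event of \Cref{thm:mean_covar_convergence}. The posterior is then again a GP, with explicit mean and covariance. Writing $\mathbf{X}$ for the inputs and $\mathbf{y}$ for the targets, the posterior of $\mathcal{GP}(m_\K, k_\K)$ has
\begin{align*}
\bar m_\K(\vx) &= m_\K(\vx) + k_\K(\vx,\mathbf{X})\bigl(\mathbf{K}_\K + \sigma^2 \mathbf{I}\bigr)^{-1}\bigl(\mathbf{y} - \mathbf{m}_\K\bigr),\\
\bar k_\K(\vx,\vx') &= k_\K(\vx,\vx') - k_\K(\vx,\mathbf{X})\bigl(\mathbf{K}_\K + \sigma^2\mathbf{I}\bigr)^{-1} k_\K(\mathbf{X},\vx').
\end{align*}
The limiting posterior mean and covariance $\bar m$, $\bar k$ are given by the same formulas with $m, k$. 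The strategy is to show that $\bar m_\K \to \bar m$ and $\bar k_\K \to \bar k$ uniformly. Once this is done, we repeat the argument of \Cref{thm:weak_convergence} for the posterior GPs.

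\textbf{Step 1: uniform convergence of posterior moments.} By \Cref{thm:mean_covar_convergence}, $\mathbf{m}_\K \to \mathbf{m}$ and $\mathbf{K}_\K \to \mathbf{K}$. The matrices $\mathbf{K}_\K + \sigma^2\mathbf{I}$ have smallest eigenvalue at least $\sigma^2$. Matrix inversion is continuous on this set, and the inverses are uniformly bounded by $\sigma^{-2}$, so $(\mathbf{K}_\K+\sigma^2\mathbf{I})^{-1}\to(\mathbf{K}+\sigma^2\mathbf{I})^{-1}$. The vector-valued functions $\vx\mapsto k_\K(\vx,\mathbf{X})$ converge uniformly on $\domain$, again by \Cref{thm:mean_covar_convergence}, and they are uniformly bounded. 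Sums and products of uniformly convergent, uniformly bounded terms converge uniformly. Hence $\Vert \bar m_\K-\bar m\Vert_\infty\to0$ and $\Vert \bar k_\K-\bar k\Vert_\infty\to0$. From here, finite-dimensional convergence follows exactly as in \Cref{thm:finite_dist_convergence}, via characteristic functions and Lévy's theorem.

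\textbf{Step 2: tightness.} This is the main obstacle, because the hypotheses are stated for $k$ rather than $\bar k$. We use that the posterior canonical semi-metric is dominated by the prior one. Under the posterior, $g - \bar m$ is distributed as $g_0 - \mathbf{A}(g_0(\mathbf{X})+\boldsymbol{\varepsilon})$, a bounded linear correction of the prior process with rank at most $N$. This gives
\begin{equation*}
d_{\bar k_\K}(\vx,\vx') \le d_{k_\K}(\vx,\vx') + C\,\Vert k_\K(\vx,\mathbf{X}) - k_\K(\vx',\mathbf{X})\Vert,
\end{equation*}
where $C$ is controlled by $\sigma^{-2}$ and $\sup\Vert\mathbf{K}_\K\Vert$. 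One can also see the bound directly: the posterior variance of an increment is at most its prior variance, since conditioning reduces variance. This yields $d_{\bar k_\K}\le d_{k_\K}$ outright. Therefore the entropy numbers of $(\domain,d_{\bar k_\K})$ are bounded by those of $(\domain,d_{k_\K})$. These were shown to be uniformly bounded for large $\K$ in the proof of \Cref{thm:weak_convergence}. Moreover, $\bar m_\K$ is continuous and converges uniformly. Dudley's theorem then gives tightness of the centered posteriors in $C(\domain)$, and adding the uniformly convergent means preserves tightness.

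\textbf{Step 3: conclusion.} Finite-dimensional convergence together with tightness yields weak convergence of $\mathcal{GP}(m_\K,k_\K)\mid\mathcal{D}$ to $\mathcal{GP}(m,k)\mid\mathcal{D}$ in $C(\domain)$, by Prokhorov's theorem. This holds for almost every sequence of sample paths, since we only used the probability-one event of \Cref{thm:mean_covar_convergence}.

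A possible alternative, if one prefers to avoid recomputing entropy bounds, is the following. Show that the conditioning map $g\mapsto$ (posterior law) can be realized as a continuous map $\Phi_\K$ applied jointly to $(g,\boldsymbol{\varepsilon})$, namely $\Phi_\K(g,\boldsymbol\varepsilon)=g+\mathbf{A}_\K(\vx)\bigl(\mathbf{y}-g(\mathbf{X})-\boldsymbol\varepsilon\bigr)$. This uses Matheron's rule, and the maps satisfy $\Phi_\K\to\Phi$ uniformly on compacts. An extended continuous mapping theorem combined with \Cref{thm:weak_convergence} then gives the result. I would present the direct approach and mention this alternative as a remark.
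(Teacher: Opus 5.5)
Your proposal is correct relative to the paper and follows its route: the paper's proof observes that the posterior mean and covariance are continuous functions of the prior ones, so they inherit the uniform convergence of \Cref{thm:mean_covar_convergence}, and then invokes ``an argument analogous to'' \Cref{thm:weak_convergence}; your Steps 1--3 spell this out, and the domination $d_{\bar k_\K}\le d_{k_\K}$ (conditioning only shrinks increment variances) is exactly what makes the tightness part of that analogy explicit. One caveat: that tightness step imports the claim from the proof of \Cref{thm:weak_convergence} that the entropy integrals of $(\domain, d_{k_\K})$ are uniformly bounded, which uniform convergence $d_{k_\K}\to d_k$ does not by itself give at scales below $\Vert d_{k_\K}-d_k\Vert_\infty$, so your Matheron-rule alternative $\Phi_\K(g,\boldsymbol{\varepsilon}) = g + \mathbf{A}_\K(\cdot)\bigl(\mathbf{y}-g(\mathbf{X})-\boldsymbol{\varepsilon}\bigr)$ combined with the extended continuous mapping theorem is the more robust write-up, since it needs only the \emph{statement} of \Cref{thm:weak_convergence}.
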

\begin{proof}
The posterior mean and covariance functions are continuous mappings of the corresponding prior mean and covariance functions.
Therefore, they preserve the uniform convergence properties established by \Cref{thm:mean_covar_convergence}.
An argument analogous to \Cref{thm:weak_convergence} gives the claim.
\end{proof}
Finally, we show that the limiting Gaussian process $g$ is the best possible Gaussian approximation, in terms of KL divergence, of the true underlying stochastic process $f$.

\KLD*
\begin{proof}
Following \citep{sun2019functional}, we define the KL divergence between stochastic processes via finite-dimensional marginals,
\begin{equation*}
    D_{\mathrm{KL}} ( \mathbb{P} \;\Vert\; \mathbb{G} )
    = \sup_{\bX \subset \domain, \; |\bX| < \infty} D_{\mathrm{KL}} ( \mathbb{P}_{\samplevec_{\bX}} \;\Vert\; \mathbb{G}_{\mathbf{h}_{\bX}} ).
\end{equation*}
For any finite $\bX \subset \domain$, let $p_{\bX}$ denote the distribution of $\samplevec_{\bX}$ with mean $\boldsymbol{\mu}_{\bX}$ and covariance $\boldsymbol{\Sigma}_{\bX}$.
For a GP $\mathbb{G}$ with mean $\tilde{m}$ and covariance $\tilde{k}$, we have $\mathbf{h}_{\bX} \sim \mathcal{N}(\tilde{\boldsymbol{\mu}}_{\bX}, \tilde{\boldsymbol{\Sigma}}_{\bX})$.
Decomposing the KL divergence into entropy and cross-entropy, we have
\begin{equation*}
    D_{\mathrm{KL}} ( p_{\bX} \;\Vert\; \mathcal{N}(\tilde{\boldsymbol{\mu}}_{\bX}, \tilde{\boldsymbol{\Sigma}}_{\bX}) )
    = -H(p_{\bX}) + H(p_{\bX}, \mathcal{N}(\tilde{\boldsymbol{\mu}}_{\bX}, \tilde{\boldsymbol{\Sigma}}_{\bX})).
\end{equation*}
Since the entropy $H(p_{\bX})$ is independent of $\mathbb{G}$, minimizing the KL divergence is equivalent to minimizing the cross-entropy.
For a Gaussian target, the cross-entropy depends on $p_{\bX}$ only through its first two moments,
\begin{equation*}
    H(p_{\bX}, \mathcal{N}(\tilde{\boldsymbol{\mu}}_{\bX}, \tilde{\boldsymbol{\Sigma}}_{\bX}))
    = \frac{1}{2} \log | 2 \pi \tilde{\boldsymbol{\Sigma}}_{\bX} | + \frac{1}{2} \mathrm{tr}(\tilde{\boldsymbol{\Sigma}}_{\bX}^{-1} \boldsymbol{\Sigma}_{\bX}) + \frac{1}{2}(\boldsymbol{\mu}_{\bX} - \tilde{\boldsymbol{\mu}}_{\bX})^\top \tilde{\boldsymbol{\Sigma}}_{\bX}^{-1} (\boldsymbol{\mu}_{\bX} - \tilde{\boldsymbol{\mu}}_{\bX}).
\end{equation*}
The cross-entropy is strictly convex in the natural parameters of the Gaussian, and is uniquely minimized when $\tilde{\boldsymbol{\mu}}_{\bX} = \boldsymbol{\mu}_{\bX}$ and $\tilde{\boldsymbol{\Sigma}}_{\bX} = \boldsymbol{\Sigma}_{\bX}$.

By \Cref{thm:weak_convergence}, the mean and covariance of $\lim_{S \to \infty} \mathcal{GP}\bigl(m_\K, k_\K\bigr)$ converge to their true counterparts, such that, for every finite $\bX$, the marginal $\mathbf{g}_{\bX} \sim \mathcal{N}(\boldsymbol{\mu}_{\bX}, \boldsymbol{\Sigma}_{\bX})$ uniquely minimizes the marginal KL divergence.
Any other GP differs on some finite $\bX^*$, yielding a strictly larger KL divergence on that marginal, and hence a strictly larger supremum.
\end{proof}
\subsection{EM-based Model}
\begin{restatable}{lemma}{ShiftInterpolationPSD}[Shift-Interpolant]
\label{lem:shift_interapolant_psd}
Let the interpolated covariance at new input locations $\mathbf{X}_*$ be defined as:
\begin{equation}
    \boldsymbol{\Sigma}(\mathbf{X}_*, \mathbf{X}_*) = k(\mathbf{X}_*, \mathbf{X}_*) + \mathbf{W} \boldsymbol{\delta}_{\Sigma} \mathbf{W}^\top,
\end{equation}
where $\mathbf{W} = \mathbf{K}_{*X} \mathbf{K}_{XX}^{-1}$ and $\boldsymbol{\delta}_{\Sigma} = \boldsymbol{\Sigma} - \mathbf{K}_{XX}$. 
If the EM-optimized covariance $\boldsymbol{\Sigma}$ is positive semi-definite (PSD), 
and the base kernel $k(\cdot, \cdot)$ is a valid covariance function 
that is positive (semi-)definite,
then $\boldsymbol{\Sigma}(\mathbf{X}_*, \mathbf{X}_*)$ is positive (semi-)definite.
\end{restatable}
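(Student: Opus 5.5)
The plan is to rewrite $\boldsymbol{\Sigma}(\mathbf{X}_*, \mathbf{X}_*)$ as a sum of two matrices that are each obviously positive semi-definite. Write $\mathbf{K}_{**} = k(\mathbf{X}_*, \mathbf{X}_*)$ and $\mathbf{K}_{X*} = \mathbf{K}_{*X}^\top$. Since $\mathbf{W} = \mathbf{K}_{*X}\mathbf{K}_{XX}^{-1}$ and $\mathbf{K}_{XX}$ is symmetric, we have
\begin{equation*}
\mathbf{W}\mathbf{K}_{XX}\mathbf{W}^\top = \mathbf{K}_{*X}\mathbf{K}_{XX}^{-1}\mathbf{K}_{X*}.
\end{equation*}
Expanding $\boldsymbol{\delta}_{\Sigma} = \boldsymbol{\Sigma} - \mathbf{K}_{XX}$ therefore gives
\begin{equation*}
\boldsymbol{\Sigma}(\mathbf{X}_*, \mathbf{X}_*) = \bigl(\mathbf{K}_{**} - \mathbf{K}_{*X}\mathbf{K}_{XX}^{-1}\mathbf{K}_{X*}\bigr) + \mathbf{W}\boldsymbol{\Sigma}\mathbf{W}^\top .
\end{equation*}
The first bracket is exactly the Nyström residual, or GP posterior covariance, of the base kernel.

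For the second term, fix any $\mathbf{v}$ and note that $\mathbf{v}^\top\mathbf{W}\boldsymbol{\Sigma}\mathbf{W}^\top\mathbf{v} = (\mathbf{W}^\top\mathbf{v})^\top\boldsymbol{\Sigma}(\mathbf{W}^\top\mathbf{v}) \geq 0$, because $\boldsymbol{\Sigma}$ is PSD by assumption.

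For the first term, I would use the fact that the joint base-kernel matrix on $\mathbf{X}\cup\mathbf{X}_*$,
\begin{equation*}
\begin{bmatrix} \mathbf{K}_{XX} & \mathbf{K}_{X*} \\ \mathbf{K}_{*X} & \mathbf{K}_{**}\end{bmatrix},
\end{equation*}
is PSD because $k$ is a valid kernel. The Schur complement of the invertible block $\mathbf{K}_{XX}$ in a PSD block matrix is PSD. To see this directly, evaluate the quadratic form of the joint matrix at the vector $(-\mathbf{K}_{XX}^{-1}\mathbf{K}_{X*}\mathbf{v},\ \mathbf{v})$; it reduces to $\mathbf{v}^\top(\mathbf{K}_{**} - \mathbf{K}_{*X}\mathbf{K}_{XX}^{-1}\mathbf{K}_{X*})\mathbf{v} \geq 0$. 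A sum of two PSD matrices is PSD, which proves the semi-definite claim.

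The main obstacle is the strict (definite) version of the statement, since neither summand is guaranteed to be positive definite on its own.

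\begin{itemize}
\item If $k$ is strictly positive definite and $\mathbf{X}_*$ consists of distinct points not in $\mathbf{X}$, the joint matrix is PD. Its Schur complement is then PD, so the sum is PD.
\item If $\mathbf{X}_*$ overlaps $\mathbf{X}$, the Schur complement is singular in those directions. Strictness must then come from $\mathbf{W}\boldsymbol{\Sigma}\mathbf{W}^\top$. I would handle this by noting that a vector $\mathbf{v}$ annihilating the Schur term corresponds to a combination lying in the span of the $\mathbf{X}$-representers. On that subspace $\mathbf{W}^\top\mathbf{v}\neq\mathbf{0}$, so positive definiteness of $\boldsymbol{\Sigma}$ gives a strictly positive contribution.
\end{itemize}

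Accordingly, I would state the strict version under the assumptions that $\boldsymbol{\Sigma}$ is PD and the points of $\mathbf{X}_*$ are distinct, and otherwise settle for the semi-definite conclusion.
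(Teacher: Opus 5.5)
Your semi-definite argument is the same as the paper's. Both split $\boldsymbol{\Sigma}(\mathbf{X}_*,\mathbf{X}_*)$ into the base-kernel Schur complement $\mathbf{A}=\mathbf{K}_{**}-\mathbf{K}_{*X}\mathbf{K}_{XX}^{-1}\mathbf{K}_{X*}$ plus the congruence $\mathbf{W}\boldsymbol{\Sigma}\mathbf{W}^\top$. The paper simply cites the Schur-complement fact, whereas you check it with an explicit test vector.

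You diverge in the definite case, and there your treatment is the correct one. The paper concludes $\boldsymbol{\Sigma}(\mathbf{X}_*,\mathbf{X}_*)\succeq\mathbf{A}\succ 0$ from positive definiteness of $k$ alone. But $\mathbf{A}$ is singular as soon as $\mathbf{X}_*$ shares a point with $\mathbf{X}$. In the ``consistency'' case $\mathbf{X}_*=\mathbf{X}$ one has $\mathbf{W}=\mathbf{I}$, $\mathbf{A}=\mathbf{0}$, and $\boldsymbol{\Sigma}(\mathbf{X}_*,\mathbf{X}_*)=\boldsymbol{\Sigma}$, which is only PSD under the stated hypothesis. So the strict half of the lemma, as written, is false. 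It needs an extra assumption such as yours: $\boldsymbol{\Sigma}\succ 0$, distinct points in $\mathbf{X}_*$, and strictly PD $k$.

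Your representer argument for the overlapping case is right. It can be made purely matrix-based, covering both of your bullet cases at once, via the identity
\[
\mathbf{v}^\top\mathbf{K}_{**}\mathbf{v}=\mathbf{v}^\top\mathbf{A}\mathbf{v}+(\mathbf{W}^\top\mathbf{v})^\top\mathbf{K}_{XX}(\mathbf{W}^\top\mathbf{v}).
\]
If $\mathbf{v}^\top\mathbf{A}\mathbf{v}=0$ and $\mathbf{W}^\top\mathbf{v}=\mathbf{0}$, then $\mathbf{v}^\top\mathbf{K}_{**}\mathbf{v}=0$, so $\mathbf{v}=\mathbf{0}$. Hence for $\mathbf{v}\neq\mathbf{0}$ one of the two summands of $\boldsymbol{\Sigma}(\mathbf{X}_*,\mathbf{X}_*)$ is strictly positive. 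Equivalently, choose $\epsilon\in(0,1]$ with $\boldsymbol{\Sigma}\succeq\epsilon\mathbf{K}_{XX}$. Then $\boldsymbol{\Sigma}(\mathbf{X}_*,\mathbf{X}_*)\succeq\epsilon\mathbf{A}+\epsilon\mathbf{W}\mathbf{K}_{XX}\mathbf{W}^\top=\epsilon\mathbf{K}_{**}\succ 0$, which even gives a quantitative lower bound.
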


\begin{proof}
We begin by expanding the definition of the interpolated covariance. Let $\mathbf{K}_{**} = k(\mathbf{X}_*, \mathbf{X}_*)$, $\mathbf{K}_{*X} = k(\mathbf{X}_*, \mathbf{X})$, and $\mathbf{K}_{XX} = k(\mathbf{X}, \mathbf{X})$. Substituting the definition of the residual $\boldsymbol{\delta}_{\Sigma}$ into the interpolation formula:
\begin{equation}
    \nonumber
\begin{aligned}
    \boldsymbol{\Sigma}(\mathbf{X}_*, \mathbf{X}_*) &= \mathbf{K}_{**} + \mathbf{W} (\boldsymbol{\Sigma} - \mathbf{K}_{XX}) \mathbf{W}^\top \\
    &= \mathbf{K}_{**} + \mathbf{W} \boldsymbol{\Sigma} \mathbf{W}^\top - \mathbf{W} \mathbf{K}_{XX} \mathbf{W}^\top.
\end{aligned}
\end{equation}

We simplify the term $\mathbf{W} \mathbf{K}_{XX} \mathbf{W}^\top$ using the definition of the weight matrix $\mathbf{W} = \mathbf{K}_{*X} \mathbf{K}_{XX}^{-1}$:
\begin{equation}
    \nonumber
    \begin{aligned}
        \mathbf{W} \mathbf{K}_{XX} \mathbf{W}^\top &= (\mathbf{K}_{*X} \mathbf{K}_{XX}^{-1}) \mathbf{K}_{XX} (\mathbf{K}_{*X} \mathbf{K}_{XX}^{-1})^\top \\
        &= \mathbf{K}_{*X} (\mathbf{K}_{XX}^{-1} \mathbf{K}_{XX}) \mathbf{K}_{XX}^{-1} \mathbf{K}_{X*} \\
        &= \mathbf{K}_{*X} \mathbf{K}_{XX}^{-1} \mathbf{K}_{X*}.
    \end{aligned}
\end{equation}
Substituting this back into Eq. (2), we can rearrange the interpolated covariance as the sum of two distinct matrices, denoted $\mathbf{A}$ and $\mathbf{B}$:
\begin{equation}
    \nonumber
    \boldsymbol{\Sigma}(\mathbf{X}_*, \mathbf{X}_*) = \underbrace{\left( \mathbf{K}_{**} - \mathbf{K}_{*X} \mathbf{K}_{XX}^{-1} \mathbf{K}_{X*} \right)}_{\mathbf{A}} + \underbrace{\left( \mathbf{W} \boldsymbol{\Sigma} \mathbf{W}^\top \right)}_{\mathbf{B}}.
\end{equation}
To prove that $\boldsymbol{\Sigma}(\mathbf{X}_*, \mathbf{X}_*) \succeq 0$, it suffices to show that both $\mathbf{A}$ and $\mathbf{B}$ are positive semi-definite.

\paragraph{1. Analysis of $\mathbf{A}$ (Schur Complement)}
The matrix $\mathbf{A}$ is recognized as the conditional covariance of the base Gaussian Process at locations $\mathbf{X}_*$ given observations at $\mathbf{X}$. Consider the joint kernel matrix of the union of grid points and new points, which is PSD by the definition of the kernel function $k(\cdot, \cdot)$:
\begin{equation}
    \nonumber
    \mathbf{K}_{\text{joint}} = \begin{bmatrix} \mathbf{K}_{XX} & \mathbf{K}_{X*} \\ \mathbf{K}_{*X} & \mathbf{K}_{**} \end{bmatrix} \succeq 0.
\end{equation}
By the properties of the Schur complement, if a block matrix is PSD, the Schur complement of its principal submatrix $\mathbf{K}_{XX}$ is also PSD. Therefore:
\begin{equation}
    \nonumber
    \mathbf{A} = \mathbf{K}_{**} - \mathbf{K}_{*X} \mathbf{K}_{XX}^{-1} \mathbf{K}_{X*} \succeq 0.
\end{equation}

\paragraph{2. Analysis of $\mathbf{B}$ (Projected Learned Covariance)}
The matrix $\mathbf{B} = \mathbf{W} \boldsymbol{\Sigma} \mathbf{W}^\top$ is a congruence transformation of $\boldsymbol{\Sigma}$.
From the M-step of the EM algorithm, $\boldsymbol{\Sigma}$ is updated as a sum of posterior covariances and outer products of mean adjustments:
\begin{equation}
    \nonumber
    \boldsymbol{\Sigma}^{(t+1)} = \frac{1}{K-1} \sum_{i=1}^{K} \mathbf{C}_i + \mathbf{\tilde m}_i \mathbf{\tilde m}_i^{\top}.
\end{equation}
Since covariance matrices $\mathbf{C}_i$ and outer products $\mathbf{\tilde m}_i \mathbf{\tilde m}_i^{\top}$ are always positive semi-definite, their sum $\boldsymbol{\Sigma}$ is PSD ($\boldsymbol{\Sigma} \succeq 0$).
For any vector $\mathbf{v}$, let $\mathbf{u} = \mathbf{W}^\top \mathbf{v}$. Then:
\begin{equation}
    \nonumber
    \mathbf{v}^\top \mathbf{B} \mathbf{v} = \mathbf{v}^\top \mathbf{W} \boldsymbol{\Sigma} \mathbf{W}^\top \mathbf{v} = \mathbf{u}^\top \boldsymbol{\Sigma} \mathbf{u} \geq 0.
\end{equation}
Thus, $\mathbf{B} \succeq 0$.

\paragraph{Conclusion}
Since $\boldsymbol{\Sigma}(\mathbf{X}_*, \mathbf{X}_*) = \mathbf{A} + \mathbf{B}$ is the sum of two positive semi-definite matrices, it is itself positive semi-definite.
Further, if the base kernel $k$ is positive definite (strict inequality), 
then $\boldsymbol{\Sigma}(\mathbf{X}_*, \mathbf{X}_*) \succeq \mathbf{A} \succ 0$.
\end{proof}

\section{Additional Details of the Experiments}

\subsection{Datasets}
GIFT-Eval and LCBench are available under the Apache License 2.0. S\&P 500 data on Kaggle was retrieved under the CC0 License. The Mauna Loa dataset is available under the CC0 License.

\subsection{Baseline Implementations}
LC-PFN is available on GitHub under the MIT License at \url{https://github.com/automl/lcpfn}

\subsection{Compute Usage}
One benefit of Empirical GPs is that they are quite compute-efficient. All of our  experiments were performed on Intel Cooper Lake CPUs, using less than 3,000 CPU hours in total.

\subsection{EM-Algorithm Convergence Speed}
\label{sec:em_convergence_speed}

Figure~\ref{fig:em_iteration_convergence} shows the $L_2$ norm of the difference of subsequent iterates of the EM-estimated prior mean $\mu^{(t)}$ and covariance $\Sigma^{(t)}$, as a function of the number of noisy independent incomplete datasets (\K), and for different numbers of observed data points per dataset (n).
Notably, convergence speed to a fixed-point of the EM-iterations depends on the number of observed entries $n$, compared to the size of the full latent vector $N = 128$ in this case.
Note that the reason $n = 128$ is not converging within one iteration in this case 
is because this is studying the noisy case, where $\sigma^2 = 10^{-4}$, 
showing that even in the full observed case, 
the EM-iterations with $\sigma^2 > 0$ require a number of iterations to converge, 
in order to de-noise the latent vectors $\{\samplevec_i\}$.

\begin{figure*}[ht]
    \centering
    \includegraphics[width=6.75in]{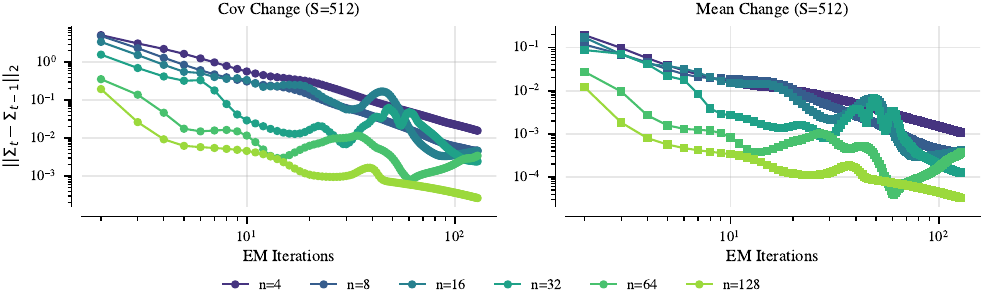}
    \caption{
        $L_2$ norm of the difference of subsequent iterates of the EM-estimated prior mean $\mu^{(t)}$ and covariance $\Sigma^{(t)}$, as a function of the number of independent incomplete datasets (\K), and for different numbers of observed data points per dataset (n).
    }
\label{fig:em_iteration_convergence}
\end{figure*}

\begin{table}
\small
\centering
\setlength{\tabcolsep}{4pt}
\caption{Average rank}
\label{tab:results_avg_rank}
\begin{tabular}{c*{8}{r}}
\toprule
 & \multicolumn{4}{c}{RMSE} & \multicolumn{4}{c}{CRPS} \\
\cmidrule(lr){2-5} \cmidrule(lr){6-9}
Observed & Last Observed & Power Law & LC PFN & Empirical GP & Last Observed & Power Law & LC PFN & Empirical GP \\
\midrule
10\% & 2.91 $\pm$ 0.37 & 3.94 $\pm$ 0.34 & 2.14 $\pm$ 0.36 & 1.00 $\pm$ 0.00 & 3.03 $\pm$ 0.30 & 3.94 $\pm$ 0.24 & 2.00 $\pm$ 0.24 & 1.03 $\pm$ 0.17 \\
20\% & 2.91 $\pm$ 0.28 & 4.00 $\pm$ 0.00 & 2.03 $\pm$ 0.38 & 1.06 $\pm$ 0.24 & 3.03 $\pm$ 0.17 & 3.97 $\pm$ 0.17 & 1.74 $\pm$ 0.44 & 1.26 $\pm$ 0.44 \\
30\% & 2.83 $\pm$ 0.45 & 3.97 $\pm$ 0.17 & 1.97 $\pm$ 0.57 & 1.23 $\pm$ 0.55 & 2.86 $\pm$ 0.36 & 4.00 $\pm$ 0.00 & 1.69 $\pm$ 0.58 & 1.46 $\pm$ 0.66 \\
40\% & 2.80 $\pm$ 0.53 & 3.94 $\pm$ 0.24 & 2.06 $\pm$ 0.64 & 1.20 $\pm$ 0.47 & 2.83 $\pm$ 0.45 & 4.00 $\pm$ 0.00 & 1.63 $\pm$ 0.60 & 1.54 $\pm$ 0.66 \\
50\% & 2.26 $\pm$ 0.66 & 3.94 $\pm$ 0.24 & 2.43 $\pm$ 0.78 & 1.37 $\pm$ 0.73 & 2.57 $\pm$ 0.61 & 4.00 $\pm$ 0.00 & 1.91 $\pm$ 0.66 & 1.51 $\pm$ 0.82 \\
60\% & 2.31 $\pm$ 0.76 & 3.91 $\pm$ 0.28 & 2.31 $\pm$ 0.80 & 1.46 $\pm$ 0.78 & 2.29 $\pm$ 0.71 & 4.00 $\pm$ 0.00 & 2.20 $\pm$ 0.76 & 1.51 $\pm$ 0.78 \\
70\% & 2.09 $\pm$ 0.95 & 3.89 $\pm$ 0.32 & 2.54 $\pm$ 0.70 & 1.49 $\pm$ 0.66 & 1.83 $\pm$ 0.66 & 4.00 $\pm$ 0.00 & 2.57 $\pm$ 0.70 & 1.60 $\pm$ 0.77 \\
80\% & 1.89 $\pm$ 0.76 & 3.97 $\pm$ 0.17 & 2.57 $\pm$ 0.70 & 1.57 $\pm$ 0.74 & 1.51 $\pm$ 0.61 & 4.00 $\pm$ 0.00 & 2.74 $\pm$ 0.56 & 1.74 $\pm$ 0.70 \\
90\% & 1.54 $\pm$ 0.61 & 4.00 $\pm$ 0.00 & 2.89 $\pm$ 0.40 & 1.57 $\pm$ 0.56 & 1.14 $\pm$ 0.36 & 4.00 $\pm$ 0.00 & 2.97 $\pm$ 0.17 & 1.89 $\pm$ 0.40 \\
\bottomrule
\end{tabular}
\end{table}

\begin{table}
\small
\centering
\setlength{\tabcolsep}{4pt}
\caption{Performance metrics at 40\% observed fraction.}
\label{tab:results_perf_metrics}
\begin{tabular}{l*{8}{r}}
\toprule
 & \multicolumn{4}{c}{RMSE} & \multicolumn{4}{c}{CRPS} \\
\cmidrule(lr){2-5} \cmidrule(lr){6-9}
Dataset & Last Observed & Power Law & LC PFN & Empirical GP & Last Observed & Power Law & LC PFN & Empirical GP \\
\midrule
adult & 3.1908 & 4.6024 & 1.8041 & 4.6075 & 0.0108 & 0.0285 & 0.0063 & 0.0118 \\
airlines & 3.1024 & 3.0310 & 1.9329 & 0.9740 & 0.0182 & 0.0281 & 0.0135 & 0.0092 \\
albert & 1.1373 & 1.6251 & 1.2812 & 1.1045 & 0.0089 & 0.0156 & 0.0082 & 0.0070 \\
Amazon\_employee... & 4.5946 & 8.4153 & 4.4209 & 4.5202 & 0.0270 & 0.0629 & 0.0213 & 0.0279 \\
APSFailure & 3.2776 & 6.6912 & 3.0040 & 1.5685 & 0.0103 & 0.0275 & 0.0064 & 0.0057 \\
Australian & 3.9182 & 6.9080 & 3.7916 & 2.6809 & 0.0330 & 0.0628 & 0.0249 & 0.0192 \\
bank-marketing & 2.3401 & 6.2384 & 2.0299 & 2.9862 & 0.0133 & 0.0361 & 0.0088 & 0.0112 \\
blood-transfusi... & 3.4556 & 5.1920 & 4.0090 & 3.2925 & 0.0242 & 0.0453 & 0.0267 & 0.0239 \\
car & 5.3544 & 8.8038 & 4.5520 & 3.4948 & 0.0613 & 0.1054 & 0.0374 & 0.0324 \\
christine & 2.0039 & 2.6694 & 1.4264 & 1.0340 & 0.0147 & 0.0298 & 0.0117 & 0.0078 \\
cnae-9 & 6.4498 & 11.2725 & 4.7778 & 3.4354 & 0.0807 & 0.1396 & 0.0446 & 0.0331 \\
connect-4 & 7.1584 & 7.0396 & 5.4272 & 5.9269 & 0.0429 & 0.0691 & 0.0342 & 0.0555 \\
covertype & 3.6915 & 6.1080 & 3.8519 & 3.5591 & 0.0283 & 0.0565 & 0.0218 & 0.0259 \\
credit-g & 2.2213 & 3.6483 & 2.1637 & 1.9849 & 0.0199 & 0.0331 & 0.0168 & 0.0171 \\
dionis & 4.4809 & 7.1385 & 3.9396 & 2.3296 & 0.1070 & 0.1925 & 0.0735 & 0.0570 \\
fabert & 3.8464 & 5.4327 & 2.8470 & 2.6908 & 0.0609 & 0.0884 & 0.0318 & 0.0307 \\
Fashion-MNIST & 2.5179 & 5.0488 & 2.6076 & 1.5071 & 0.0197 & 0.0415 & 0.0140 & 0.0087 \\
helena & 1.5967 & 2.1588 & 1.4563 & 1.3551 & 0.1274 & 0.1846 & 0.0920 & 0.0881 \\
higgs & 2.1750 & 2.4492 & 1.3215 & 1.3709 & 0.0192 & 0.0287 & 0.0107 & 0.0124 \\
jannis & 3.3169 & 3.9512 & 2.2955 & 2.5233 & 0.0231 & 0.0425 & 0.0163 & 0.0176 \\
jasmine & 2.8734 & 5.2272 & 2.8076 & 2.7209 & 0.0228 & 0.0487 & 0.0165 & 0.0148 \\
jungle\_chess\_2p... & 3.5999 & 4.3597 & 2.5321 & 3.8737 & 0.0204 & 0.0394 & 0.0133 & 0.0221 \\
kc1 & 5.4086 & 10.3631 & 6.3667 & 4.6170 & 0.0272 & 0.0799 & 0.0300 & 0.0256 \\
KDDCup09\_appete... & 4.9707 & 6.9993 & 3.7045 & 3.3654 & 0.0265 & 0.0474 & 0.0187 & 0.0193 \\
kr-vs-kp & 3.2222 & 5.4259 & 2.8233 & 2.8330 & 0.0290 & 0.0510 & 0.0167 & 0.0160 \\
mfeat-factors & 6.9274 & 12.7500 & 5.5009 & 3.8669 & 0.0671 & 0.1406 & 0.0377 & 0.0309 \\
MiniBooNE & 5.0709 & 6.7038 & 3.4769 & 4.1582 & 0.0121 & 0.0309 & 0.0082 & 0.0154 \\
nomao & 2.8692 & 6.1812 & 3.0726 & 1.7033 & 0.0106 & 0.0333 & 0.0081 & 0.0068 \\
numerai28.6 & 0.4795 & 0.4879 & 0.3821 & 0.3305 & 0.0062 & 0.0072 & 0.0042 & 0.0037 \\
phoneme & 2.8501 & 4.9135 & 2.5345 & 2.7296 & 0.0172 & 0.0358 & 0.0135 & 0.0142 \\
segment & 4.7394 & 7.5308 & 5.5354 & 3.6175 & 0.0587 & 0.1071 & 0.0565 & 0.0385 \\
shuttle & 10.7070 & 14.0135 & 9.2256 & 7.8955 & 0.0398 & 0.0880 & 0.0317 & 0.0401 \\
sylvine & 3.0218 & 5.1236 & 2.6685 & 1.9252 & 0.0253 & 0.0467 & 0.0142 & 0.0101 \\
vehicle & 3.5268 & 4.6551 & 3.7274 & 2.8041 & 0.0516 & 0.0675 & 0.0424 & 0.0329 \\
volkert & 2.5433 & 4.0675 & 2.2571 & 2.1124 & 0.0364 & 0.0640 & 0.0251 & 0.0269 \\
\bottomrule
\end{tabular}
\end{table}


\end{document}

%% file: tab/gift_eval.tex
\begin{table*}[ht]
\caption{Average ranks (mean $\pm$ standard error) of various models on the GIFT-Eval time series forecasting benchmark, by domain and overall. Empirical GP (ours) achieves the highest rank among statistical models, 
and 
outperforms four out of eight deep learning baselines
}
\label{tab:gift_eval}
\small
\centering
\setlength{\tabcolsep}{4pt}
\begin{adjustbox}{max width=\textwidth}
\begin{tabular}{ l l | c c c c c c c | c c c | c}
\toprule
& Model & Econ/Fin & Energy & Healthcare & Nature & Sales & Transport & Web/CloudOps & Overall \\
\midrule
\multirow{8}{*}{\rotatebox[origin=c]{90}{Statistical Models}} & Naive & 10.17 $\pm$ 0.80 & 12.75 $\pm$ 0.54 & 12.40 $\pm$ 1.19 & 14.53 $\pm$ 0.38 & 14.75 $\pm$ 0.22 & 15.27 $\pm$ 0.18 & 11.65 $\pm$ 0.57 & 13.09 $\pm$ 0.28 \\
 & Gaussian RBF GP & 14.83 $\pm$ 0.15 & 11.97 $\pm$ 0.68 & 13.80 $\pm$ 0.33 & 10.73 $\pm$ 0.57 & 10.75 $\pm$ 0.65 & 12.20 $\pm$ 0.40 & 13.55 $\pm$ 0.60 & 12.36 $\pm$ 0.30 \\
 & Auto ETS & 5.67 $\pm$ 1.37 & 10.59 $\pm$ 0.86 & 4.80 $\pm$ 1.53 & 11.93 $\pm$ 1.13 & 12.50 $\pm$ 1.15 & 13.07 $\pm$ 0.57 & 11.75 $\pm$ 0.76 & 10.90 $\pm$ 0.46 \\
 & Spectral Mixture GP & 13.17 $\pm$ 0.60 & 10.53 $\pm$ 0.63 & 14.00 $\pm$ 0.63 & 9.20 $\pm$ 0.76 & 9.50 $\pm$ 1.35 & 9.20 $\pm$ 0.65 & 12.70 $\pm$ 0.75 & 10.87 $\pm$ 0.35 \\
 & Auto Theta & 4.50 $\pm$ 0.87 & 11.75 $\pm$ 0.72 & 8.00 $\pm$ 1.33 & 11.73 $\pm$ 0.75 & 9.00 $\pm$ 1.27 & 13.47 $\pm$ 0.50 & \underline{8.15 $\pm$ 1.11} & 10.52 $\pm$ 0.45 \\
 & Seasonal Naive & 7.50 $\pm$ 1.17 & 7.88 $\pm$ 0.43 & 9.40 $\pm$ 1.34 & 11.93 $\pm$ 0.37 & 13.25 $\pm$ 0.41 & 11.27 $\pm$ 0.43 & 9.70 $\pm$ 0.90 & 9.68 $\pm$ 0.32 \\
 & Auto Arima & \underline{3.50 $\pm$ 0.84} & \underline{7.53 $\pm$ 0.49} & \underline{4.60 $\pm$ 1.40} & 9.67 $\pm$ 0.91 & \underline{8.25 $\pm$ 1.14} & 10.93 $\pm$ 0.46 & 10.45 $\pm$ 0.81 & 8.62 $\pm$ 0.37 \\ 
 & Empirical GP (ours) & 8.67 $\pm$ 1.26 & 7.72 $\pm$ 0.85 & 5.20 $\pm$ 1.68 & \underline{7.73 $\pm$ 0.60} & 8.50 $\pm$ 1.44 & \underline{5.87 $\pm$ 0.32} & 8.50 $\pm$ 1.18 & \textbf{7.56 $\pm$ 0.42} \\
\midrule
\multirow{8}{*}{\rotatebox[origin=c]{90}{Deep Learning}} & Crossformer & 16.00 $\pm$ 0.00 & 9.62 $\pm$ 0.87 & 11.60 $\pm$ 2.66 & 7.60 $\pm$ 1.54 & 15.75 $\pm$ 0.22 & 8.00 $\pm$ 1.85 & 7.75 $\pm$ 0.64 & 9.42 $\pm$ 0.56 \\
 & DLinear & 11.67 $\pm$ 0.45 & 9.03 $\pm$ 0.37 & 10.00 $\pm$ 1.26 & 8.80 $\pm$ 0.67 & 8.75 $\pm$ 0.74 & 9.53 $\pm$ 0.46 & 8.80 $\pm$ 0.63 & 9.23 $\pm$ 0.24 \\
 & N-BEATS & 7.83 $\pm$ 1.12 & 9.19 $\pm$ 0.50 & 9.20 $\pm$ 0.95 & 8.73 $\pm$ 0.69 & 6.50 $\pm$ 1.30 & 7.67 $\pm$ 0.36 & 6.90 $\pm$ 0.52 & 8.22 $\pm$ 0.27 \\
 & DeepAR & 9.83 $\pm$ 1.52 & 9.56 $\pm$ 0.82 & 6.80 $\pm$ 1.86 & 6.93 $\pm$ 1.27 & 3.00 $\pm$ 0.35 & 4.40 $\pm$ 0.83 & 8.10 $\pm$ 1.05 & 7.66 $\pm$ 0.49 \\
 & TiDE & 11.33 $\pm$ 0.84 & 6.16 $\pm$ 0.66 & 9.60 $\pm$ 1.82 & 7.80 $\pm$ 1.03 & 8.25 $\pm$ 1.43 & 5.80 $\pm$ 0.51 & 7.00 $\pm$ 0.60 & 7.11 $\pm$ 0.36 \\
 & TFT & 3.83 $\pm$ 0.72 & 4.22 $\pm$ 0.46 & 5.60 $\pm$ 1.66 & 3.00 $\pm$ 0.42 & 3.25 $\pm$ 0.74 & \underline{2.53 $\pm$ 0.26} & 4.65 $\pm$ 0.78 & 3.87 $\pm$ 0.27 \\
 & iTransformer & 4.17 $\pm$ 0.96 & 4.34 $\pm$ 0.73 & 6.20 $\pm$ 1.82 & 2.93 $\pm$ 0.68 & 2.25 $\pm$ 0.54 & 3.40 $\pm$ 0.35 & 3.35 $\pm$ 0.65 & 3.77 $\pm$ 0.33 \\
 & PatchTST & \underline{3.33 $\pm$ 0.61} & \underline{3.16 $\pm$ 0.46} & \underline{4.80 $\pm$ 1.11} & \underline{2.73 $\pm$ 0.32} & \underline{1.75 $\pm$ 0.41} & 3.40 $\pm$ 0.32 & \underline{3.00 $\pm$ 0.43} & \textbf{3.13 $\pm$ 0.21} \\
\bottomrule
\end{tabular}
\end{adjustbox}
\vspace{-1.5ex}
\end{table*}

%% file: references.bib
@article{fernique1970,
    author = {Xavier Fernique},
    title = {Intégrabilité des vecteurs gaussiens},
    journal = {Comptes Rendus de l'Académie des Sciences, Série A-B},
    volume = {270},
    pages = {1698--1699},
    year = {1970},
}

@article{mourier1953,
    author = {Edith Mourier},
    title = {Éléments aléatoires dans un espace de Banach},
    journal = {Annales de l'Institut Henri Poincaré},
    volume = {13},
    number = {3},
    pages = {161--244},
    year = {1953},
}

@book{levy1925,
    author = {Paul Lévy},
    title = {Calcul des probabilités},
    publisher = {Gauthier-Villars},
    year = {1925},
}

@article{prokhorov1956,
    author = {Yuri V. Prokhorov},
    title = {Convergence of random processes and limit theorems in probability theory},
    journal = {Theory of Probabiloity \& Its Applications},
    volume = {1},
    number = {2},
    pages = {157--214},
    year = {1956},
}

@article{dudley1967,
    author = {Richard M. Dudley},
    title = {The Sizes of Compact Subsets of Hilbert Space and Continuity of Gaussian Processes},
    journal = {Journal of Functional Analysis},
    volume = {1},
    pages = {125--165},
    year = {1967},
}

@article{salinas2020deepar,
	author = {Salinas, David and Flunkert, Valentin and Gasthaus, Jan and Januschowski, Tim},
	journal = {International Journal of Forecasting},
	number = {3},
	pages = {1181--1191},
	title = {DeepAR: Probabilistic forecasting with autoregressive recurrent networks},
	volume = {36},
	year = {2020}}

@article{lime2021tft,
    author = {Bryan Lim and Sercan O. Arik and Nicolas Loeff and Tomas Pfister},
    title = {Temporal Fusion Transformers for interpretable multi-horizon time series forecasting},
    year = {2021},
    pages = {1748--1764},
    volume = {37},
    number = {4},
    journal = {International Journal of Forecasting},
}

@inproceedings{nie2023patchtst,
    title = {A Time Series is Worth 64 Words: Long-term Forecasting with Transformers}, 
    author = {Yuqi Nie and Nam H. Nguyen and Phanwadee Sinthong and Jayant Kalagnanam},
    year = {2023},
    booktitle = {International Conference on Learning Representations},
}

@inproceedings{aksu2024gifteval,
    title = {{GIFT}-Eval: A Benchmark for General Time Series Forecasting Model Evaluation},
    author = {Taha Aksu and Gerald Woo and Juncheng Liu and Xu Liu and Chenghao Liu and Silvio Savarese and Caiming Xiong and Doyen Sahoo},
    booktitle = {NeurIPS Workshop on Time Series in the Age of Large Models},
    year = {2024},
}

@article {zimmer2021auto,
    author = {Lucas Zimmer and Marius Lindauer and Frank Hutter},
    title = {Auto-PyTorch Tabular: Multi-Fidelity MetaLearning for Efficient and Robust AutoDL},
    journal = {IEEE Transactions on Pattern Analysis and Machine Intelligence},
    year = {2021},
    volume = {43},
    number = {9},
    pages = {3079--3090}
}

@inproceedings{balandat2020botorch,
  title={{BoTorch: A Framework for Efficient Monte-Carlo Bayesian Optimization}},
  author={Balandat, Maximilian and Karrer, Brian and Jiang, Daniel R. and Daulton, Samuel and Letham, Benjamin and Wilson, Andrew Gordon and Bakshy, Eytan},
  booktitle = {Advances in Neural Information Processing Systems 33},
  year={2020},
}

@misc{thoning2025co2,
    author = {Thoning, K. W. and Crotwell, A. M. and Mund, J. W.},
    title = {Atmospheric Carbon Dioxide Dry Air Mole Fractions from continuous measurements at {Mauna Loa}, {Hawaii}, {Barrow}, {Alaska}, {American Samoa} and {South Pole}, 1973-present},
    year = {2025},
    publisher = {National Oceanic and Atmospheric Administration (NOAA), Global Monitoring Laboratory (GML)},
}

@misc{rasmussen2024co2model,
    author = {Rasmussen, Carl Edward},
    title = {{M}auna {L}oa Atmospheric
Carbon Dioxide Model},
    year = {2024},
    howpublished = {\url{https://mlg.eng.cam.ac.uk/carl/words/carbon-model.pdf}},
    note = {Machine Learning Group, University of Cambridge},
}

@book{hyndman2018forecasting,
    author = {Hyndman, Rob J. and Athanasopoulos, George},
    title = {Forecasting: Principles and Practice},
    publisher = {OTexts},
    year = {2018},
}

@misc{garza2022statsforecast,
    author = {Azul Garza and Max Mergenthaler Canseco and Cristian Challú and Kin G. Olivares},
    title = {{StatsForecast}: Lightning fast forecasting with statistical and econometric models},
    year = {2022},
    howpublished = {{PyCon} Salt Lake City, Utah, US 2022},
    url = {https://github.com/Nixtla/statsforecast},
}

@inproceedings{wilson2013gaussian,
	author = {Wilson, Andrew Gordon and Adams, Ryan Prescott},
	booktitle = {International Conference on Machine Learning},
	title = {Gaussian Process Kernels for Pattern Discovery and Extrapolation},
	year = {2013}
}

@article{das2023tide,
    author = {Das, Abhimanyu and Kong, Weihao and Leach, Andrew and Mathur, Shaan and Sen, Rajat and Yu, Rose},
    title = {Long-term Forecasting with {TiDE}: Time-series Dense Encoder},
    journal = {arXiv preprint arXiv:2304.08424},
    year = {2023},
}

@inproceedings{oreshkin2020nbeats,
    author = {Oreshkin, Boris N. and Carpov, Dmitri and Chapados, Nicolas and Bengio, Yoshua},
    title = {{N-BEATS}: Neural Basis Expansion Analysis for Interpretable Time Series Forecasting},
    booktitle = {International Conference on Learning Representations},
    year = {2020},
}

@inproceedings{zeng2023transformers,
    author = {Zeng, Ailing and Chen, Muxi and Zhang, Lei and Xu, Qiang},
    title = {Are Transformers Effective for Time Series Forecasting?},
    booktitle = {Proceedings of the AAAI Conference on Artificial Intelligence},
    volume = {37},
    issue = {9},
    pages = {11121--11128},
    year = {2023},
}

@inproceedings{zhang2023crossformer,
    author = {Zhang, Yunhao and Yan, Junchi},
    title = {Crossformer: Transformer Utilizing Cross-Dimension Dependency for Multivariate Time Series Forecasting},
    booktitle = {International Conference on Learning Representations},
    year = {2023},
}

@inproceedings{liu2024itransformer,
    author = {Liu, Yong and Hu, Tengge and Zhang, Haoran and Wu, Haixu and Wang, Shiyu and Ma, Lintao and Long, Mingsheng},
    title = {{iTransformer}: Inverted Transformers Are Effective for Time Series Forecasting},
    booktitle = {International Conference on Learning Representations},
    year = {2024},
}

@inproceedings{sun2019functional,
    title = {Functional Variational Bayesian Neural Networks}, 
    author = {Shengyang Sun and Guodong Zhang and Jiaxin Shi and Roger Grosse},
    booktitle = {International Conference on Learning Representations},
    year = {2019},
}

@book{rasmussen2006gaussian,
  title={Gaussian Processes for Machine Learning},
  author={Rasmussen, Carl Edward and Williams, Christopher K. I.},
  year={2006},
  publisher={The MIT Press}
}

@inproceedings{rothfuss2021pacoh,
	author = {Rothfuss, Jonas and Fortuin, Vincent and Josifoski, Martin and Krause, Andreas},
	booktitle = {Proceedings of the 38th International Conference on Machine Learning},
	title = {{PACOH: Bayes-Optimal Meta-Learning with PAC-Guarantees}},
	year = {2021}}

@article{wang2024hyperbo,
  author  = {Zi Wang and George E. Dahl and Kevin Swersky and Chansoo Lee and Zachary Nado and Justin Gilmer and Jasper Snoek and Zoubin Ghahramani},
  title   = {{Pre-trained Gaussian Processes for Bayesian Optimization}},
  journal = {Journal of Machine Learning Research},
  year    = {2024},
  volume  = {25},
  number  = {212},
  pages   = {1--83},
}

@inproceedings{perrone2018scalable,
	author = {Perrone, Valerio and Jenatton, Rodolphe and Seeger, Matthias W and Archambeau, Cedric},
	booktitle = {Advances in Neural Information Processing Systems},
	title = {Scalable Hyperparameter Transfer Learning},
	volume = {31},
	year = {2018}}

@inproceedings{wistuba2021few,
  title={Few-shot {B}ayesian optimization with deep kernel surrogates},
  author={Wistuba, Martin and Grabocka, Josif},
  booktitle={International Conference on Learning Representations (ICLR)},
  year={2021}
}

@inproceedings{bonilla2007multi,
	author = {Bonilla, Edwin V and Chai, Kian and Williams, Christopher},
	booktitle = {Advances in Neural Information Processing Systems},
	title = {Multi-task Gaussian Process Prediction},
	year = {2007}}

@inproceedings{swersky2013multi,
	author = {Swersky, Kevin and Snoek, Jasper and Adams, Ryan P},
	booktitle = {Advances in Neural Information Processing Systems},
	title = {Multi-Task Bayesian Optimization},
	year = {2013}}

@inproceedings{salinas2020quantile,
	author = {Salinas, David and Shen, Huibin and Perrone, Valerio},
	booktitle = {Proceedings of the 37th International Conference on Machine Learning},
	title = {A quantile-based approach for hyperparameter transfer learning},
	year = {2020}}

@article{feurer2018scalable,
  title={{Practical Transfer Learning for Bayesian Optimization}},
  author={Feurer, Matthias and Letham, Benjamin and Hutter, Frank and Bakshy, Eytan},
  journal={arXiv preprint arXiv:1802.02219},
  year={2018}
}

@inproceedings{schwaighofer2004learningGPkernels,
	author = {Schwaighofer, Anton and Tresp, Volker and Yu, Kai},
	booktitle = {Advances in Neural Information Processing Systems},
	title = {Learning Gaussian Process Kernels via Hierarchical Bayes},
	year = {2004}}

@inproceedings{duvenaud2013kernelsearch,
	author = {Duvenaud, David and Lloyd, James and Grosse, Roger and Tenenbaum, Joshua and Zoubin, Ghahramani},
	booktitle = {Proceedings of the 30th International Conference on Machine Learning},
	title = {Structure Discovery in Nonparametric Regression through Compositional Kernel Search},
	year = {2013}}

@article{zhou2019adaptive,
  title={Adaptive bayesian linear regression for automated machine learning},
  author={Zhou, Weilin and Precioso, Frederic},
  journal={arXiv preprint arXiv:1904.00577},
  year={2019}
}

@inproceedings{titsias2009variational,
	author = {Titsias, Michalis},
	booktitle = {Proceedings of the Twelfth International Conference on Artificial Intelligence and Statistics},
	title = {Variational Learning of Inducing Variables in Sparse Gaussian Processes},
	year = {2009}}

@inproceedings{hensman2013gaussian,
	author = {Hensman, James and Fusi, Nicol\`{o} and Lawrence, Neil D.},
	booktitle = {Proceedings of the Twenty-Ninth Conference on Uncertainty in Artificial Intelligence},
	title = {Gaussian processes for Big data},
	year = {2013}}

@inproceedings{wilson2016deep,
	author = {Wilson, Andrew Gordon and Hu, Zhiting and Salakhutdinov, Ruslan and Xing, Eric P.},
	booktitle = {International Conference on Artificial Intelligence and Statistics},
	title = {Deep Kernel Learning},
	year = {2016}}

@inproceedings{domhan2015speeding,
	author = {Domhan, Tobias and Springenberg, Jost Tobias and Hutter, Frank},
	booktitle = {Proceedings of the 24th International Conference on Artificial Intelligence},
	title = {Speeding up automatic hyperparameter optimization of deep neural networks by extrapolation of learning curves},
	year = {2015}}

@article{swersky2014freeze,
  title={Freeze-thaw Bayesian optimization},
  author={Swersky, Kevin and Snoek, Jasper and Adams, Ryan Prescott},
  journal={arXiv preprint arXiv:1406.3896},
  year={2014}
}

@inproceedings{adriaensen2023efficient,
	author = {Adriaensen, Steven and Rakotoarison, Herilalaina and M\"{u}ller, Samuel and Hutter, Frank},
	booktitle = {Advances in Neural Information Processing Systems},
	title = {Efficient {B}ayesian learning curve extrapolation using prior-data fitted networks},
	year = {2023}}

@InProceedings{lin2025lkgp,
  title = 	 {Scalable {G}aussian Processes with Latent {K}ronecker Structure},
  author =       {Lin, Jihao Andreas and Ament, Sebastian and Balandat, Maximilian and Eriksson, David and Hern\'{a}ndez-Lobato, Jos\'{e} Miguel and Bakshy, Eytan},
  booktitle = 	 {Proceedings of the 42nd International Conference on Machine Learning},
  pages = 	 {37730--37744},
  year = 	 {2025},
  editor = 	 {Singh, Aarti and Fazel, Maryam and Hsu, Daniel and Lacoste-Julien, Simon and Berkenkamp, Felix and Maharaj, Tegan and Wagstaff, Kiri and Zhu, Jerry},
  volume = 	 {267},
  series = 	 {Proceedings of Machine Learning Research},
  month = 	 {13--19 Jul},
  publisher =    {PMLR},
  url = 	 {https://proceedings.mlr.press/v267/lin25a.html}
}
